\documentclass{article}

\usepackage[preprint]{neurips_2026}
\usepackage[utf8]{inputenc}
\usepackage[T1]{fontenc}
\usepackage{microtype}
\usepackage{graphicx}
\usepackage[caption=false,font=small,labelfont=bf]{subfig}
\usepackage{booktabs}
\usepackage{float}
\usepackage{wrapfig}
\usepackage{placeins}
\usepackage{needspace}
\usepackage{amsmath,amsfonts,amssymb,mathtools,amsthm}
\usepackage{xcolor}
\usepackage{hyperref}
\usepackage{url}
\hypersetup{
  pdftitle={Prediction--Loss Alignment for Sampler-Robust Flow Matching Training},
  pdfauthor={Jiadong Hong, Lei Liu, Xinyu Bian, Wenjie Wang, Zhaoyang Zhang},
  colorlinks=true,
  linkcolor=[rgb]{0.10,0.25,0.55},
  citecolor=[rgb]{0.10,0.25,0.55},
  urlcolor=[rgb]{0.10,0.25,0.55}
}

\graphicspath{{./}{imgs/}}

\theoremstyle{plain}
\newtheorem{theorem}{Theorem}[section]
\newtheorem{proposition}[theorem]{Proposition}

\theoremstyle{definition}
\newtheorem{assumption}[theorem]{Assumption}
\theoremstyle{remark}

\newcommand{\Ex}{\mathbb{E}}
\newcommand{\R}{\mathbb{R}}
\newcommand{\norm}[1]{\left\lVert #1\right\rVert}

\title{Prediction--Loss Alignment for\\
Sampler-Robust Flow Matching Training}

\author{%
  Jiadong Hong\textsuperscript{1}\quad
  Lei Liu\textsuperscript{1}\quad
  Xinyu Bian\textsuperscript{2}\quad
  Wenjie Wang\textsuperscript{2}\quad
  Zhaoyang Zhang\textsuperscript{1}\\[3pt]
  {\normalfont\small\textsuperscript{1}Zhejiang University, Hangzhou, China\quad
  \textsuperscript{2}Huawei Technologies Co., Ltd, Hong Kong}\\
  {\normalfont\footnotesize\texttt{\{jiadong5,lei\_liu,zhzy\}@zju.edu.cn}\quad
  \texttt{\{bian.xinyu,wang.wenjie\}@huawei.com}}
}

\begin{document}

\maketitle

\begin{abstract}
Recent work has popularized a practical recipe in diffusion and flow matching: predict the clean signal $x$, convert it to a velocity, and train through a velocity-space loss. The conversion contains a singular endpoint amplification and therefore appears prone to unstable optimization, yet recent systems obtain strong empirical results with this recipe. We investigate this tension through the integrability of the pre-optimizer stochastic-gradient second moment. Under stated initialization conditions, the moment diverges under Uniform sampling; boundary-suppressing sampling can restore integrability under an additional upper-growth condition. We then show that prediction--loss alignment eliminates this conversion-induced source of non-integrability. Under a uniform moment bound, alignment yields a finite second moment for every timestep density, including Uniform sampling. Controlled experiments across continuous and binary settings reproduce the predicted sampler-dependent instability and show that aligned objectives remain trainable across the tested samplers. These results reconcile pointwise amplification with sampler-dependent empirical success and support alignment as a principled route to more robust flow-matching training.
\end{abstract}

\section{Introduction}
\label{sec:introduction}

Recent work has brought renewed attention to clean-data prediction in diffusion and flow matching. ``Just image Transformers'' (JiT) predicts clean images in pixel space, while Embedded Language Flows (ELF) adopts an analogous recipe in its denoising branch for continuous language embeddings \citep{li2025back,hu2026elf}. JiT and the denoising branch of ELF both predict the clean target $x$, convert that prediction into a velocity, and apply a velocity-space loss. We refer to this pairing as $x$-prediction+$v$-loss.

The potential issue is visible from three elementary equations. Let $X\sim p_{\mathrm{data}}$ and $E\sim\mathcal N(0,I)$. Along the linear path from noise to data,
\begin{equation}
Z_t=tX+(1-t)E,\qquad t\in(0,1),
\label{eq:path}
\end{equation}
the target velocity is $V=X-E$. If a network directly predicts the clean signal $\hat X_\theta(Z_t,t)$, its implied velocity is
\begin{equation}
\hat V_\theta(Z_t,t)=\frac{\hat X_\theta(Z_t,t)-Z_t}{1-t}.
\label{eq:implied_velocity}
\end{equation}
Consequently, writing $\ell_x=\norm{\hat X_\theta-X}^2$, the velocity loss is
\begin{equation}
\ell_v=\norm{\hat V_\theta-V}^2
=\frac{1}{(1-t)^2}\ell_x.
\label{eq:conversion_identity}
\end{equation}
Equation~\eqref{eq:conversion_identity} places every remaining signal-prediction error under an inverse-square endpoint factor. As $t\to1$, the corresponding sample gradient is amplified by $(1-t)^{-2}$ and its second moment by $(1-t)^{-4}$. On its face, this recipe appears poorly conditioned: rare samples near the clean endpoint may dominate stochastic updates. Yet JiT and ELF report strong empirical performance with precisely this prediction--loss pairing. This apparent contradiction motivates the central question of our work: \emph{why does this seemingly singular recipe work, and when does it fail?}

We investigate this question through complementary theoretical analysis and controlled experiments. We focus the analysis on the integrability of the pre-optimizer stochastic-gradient second moment over training timesteps. The pointwise amplification alone does not determine the outcome. Its integrated effect also depends on how the prediction residual and parameter gradient behave near the endpoint, and on how frequently the timestep sampler exposes that region. We formalize this interaction through a sufficient endpoint criterion for initialized networks.

This analysis provides a mechanism through which empirical success and endpoint amplification can coexist. Under the stated upper-growth conditions, boundary-suppressing samplers make the integrated gradient moment finite; under the initialization lower bounds, Uniform sampling instead gives divergence. Our controlled studies reproduce both sides of this behavior: mismatched training exhibits orders-of-magnitude pre-optimizer gradient amplification and can collapse under Uniform sampling, while boundary-suppressing sampling can recover competitive training. Thus, sampler choice can make the mismatched recipe viable without removing its underlying pointwise amplification.

We further study \emph{prediction--loss alignment}: train an $x$-predictor directly with an $x$-space loss and use its implied velocity only for ODE generation. Alignment removes the conversion factor from training. Under the stated uniform gradient-moment conditions, the aligned gradient second moment is integrable for every timestep density, including Uniform sampling. In this precise sense, alignment makes flow-matching training more robust to timestep-sampler choice for the mismatch-induced endpoint failure mode.

Our contributions are:
\begin{itemize}
    \item We locate the apparent instability of $x$-prediction+$v$-loss in a sampler-dependent gradient-moment integrability problem and derive a sufficient endpoint criterion for initialized direct-head and identity-residual networks.
    \item We explain why boundary-suppressing samplers can conceal mismatch-induced divergence, and show that prediction--loss alignment removes the mismatch-induced endpoint factor from training.
    \item Controlled continuous and binary experiments exhibit the predicted endpoint amplification together with a sampler-dependent training reversal, while aligned objectives remain trainable across the tested samplers.
\end{itemize}

Together, these results reconcile the singular conversion with sampler-dependent empirical success and support alignment as a principled route to flow-matching training that is more robust to timestep-sampler choice.

\section{Related Work}
\label{sec:related}

\paragraph{Parameterization and weighting.}
Diffusion objectives have long been understood as noise-level-weighted regression problems, with practical performance depending on preconditioning, parameterization, and loss weighting \citep{ho2020denoising,karras2022edm,kingma2023understanding,hang2023minsnr}. Kingma and Gao further interpret the noise schedule as an importance-sampling distribution for a weighted objective, separating that objective from its Monte Carlo estimator. Flow matching is conventionally formulated as velocity regression \citep{lipmanflow,liurflow}. Recent clean-data parameterizations instead predict $x$ directly and may convert that prediction into velocity space for training \citep{li2025back,hu2026elf}. Our work isolates this specific $x$-prediction+$v$-loss recipe and characterizes how its terminal pre-optimizer gradient amplification interacts with timestep sampling.

\paragraph{Timestep sampling.}
Non-uniform noise-level or timestep sampling is an established component of diffusion and rectified-flow training \citep{karras2022edm,esser2024scaling}. Logit-Normal sampling is used in the Stable Diffusion 3 rectified-flow recipe and subsequently adopted by JiT \citep{esser2024scaling,li2025back}. This practice is directly relevant to the empirical puzzle motivating our work: it reduces exposure to the clean endpoint, precisely where converting an $x$-prediction into velocity space amplifies prediction error. We characterize when this suppression makes the stochastic-gradient second moment integrable and when the underlying endpoint amplification remains exposed.

\paragraph{Discrete and analog-bit modeling.}
Discrete generative models may operate directly on discrete state spaces through discrete-time transition kernels or continuous-time transition rates \citep{austin2021d3pm,lou2024discrete,campbell2024generative}, or use continuous-state representations and continuous flows for categorical endpoints \citep{chenanalog,eijkelboom2024vfm,dieleman2022cdcd}. We use an analog-bit representation from the latter family as a controlled binary setting for testing the same optimization mechanism under continuous Gaussian interpolation. Our contribution concerns prediction--loss conversion and endpoint exposure within this continuous-state representation.

\section{Problem Formulation and Assumptions}
\label{sec:framework}

\subsection{Conditional flow-matching setup}

\begin{wrapfigure}[17]{r}{0.48\textwidth}
\centering
\includegraphics[width=\linewidth,trim=0.8bp 0.8bp 1.3bp 3.7bp,clip]{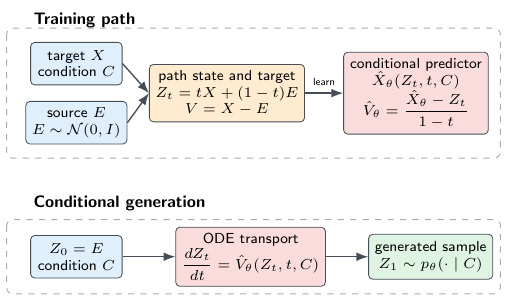}
\caption{\textbf{Conditional flow matching.} Training forms $Z_t$ and $V$; generation integrates $\hat V_\theta(Z_t,t,C)$ from $Z_0$ to $Z_1$.}
\label{fig:problem_setup}
\end{wrapfigure}

We study conditional flow matching, which learns a time-dependent velocity field that transports a tractable source distribution to a data distribution conditioned on side information. Under a linear Gaussian interpolation, we examine how clean-signal prediction, velocity-space supervision, and timestep sampling interact; Figure~\ref{fig:problem_setup} summarizes the setup.

Let $(X,C)$ denote the target data and its optional conditioning variable, with $X\in\R^D$ and $\Ex\norm{X}^2<\infty$. Let $E\sim\mathcal N(0,I)$ be independent of $(X,C)$. For a fixed $t\in(0,1)$, the linear interpolation and its samplewise velocity target are
\begin{equation}
Z_t=tX+(1-t)E,
\qquad V=X-E.
\label{eq:formal_path}
\end{equation}
Training regresses the conditional velocity along this path; generation or conditional inference integrates the learned velocity field from the Gaussian source while holding $C$ fixed. Here $t$ denotes a deterministic path location. We reserve $T$ for a random training timestep whose distribution is specified with the objective below.

\WFclear\vspace{1.5\baselineskip}
For fixed $t$, all expectations and essential suprema are over the joint law of $(X,C,E)$ and, where applicable, independent model randomness such as dropout. Predictors and conditional means may depend on $C$; we suppress this argument when it is not needed explicitly.

\subsection{Prediction--loss conversion and stochastic optimization}

\paragraph{Pointwise conversion.}
Recall Equations~\eqref{eq:path}--\eqref{eq:conversion_identity}. Fix a parameter state $\theta$, write $r_\theta(t)=\hat X_\theta(Z_t,t)-X$ and $\ell_x(t)=\norm{r_\theta(t)}^2$, and define $a(t):=(1-t)^{-2}$. Since $a(t)$ is independent of $\theta$,
\begin{equation}
\ell_v(t)=a(t)\ell_x(t),\qquad
g_x(t):=\nabla_\theta\ell_x(t),\qquad
g_v(t):=\nabla_\theta\ell_v(t)=a(t)g_x(t).
\label{eq:formal_conversion}
\end{equation}
Here the sampled input $Z_t$ is not differentiated through, and $g_x(t)$ and $g_v(t)$ are the per-sample parameter gradients before any optimizer-specific transformation.

At every fixed $t<1$, multiplying the loss by the positive scalar $a(t)$ leaves its unrestricted population minimizer unchanged. Thus, in the idealized infinite-capacity limit, the two formulations target the same conditional predictor; Appendix~\ref{app:flow_target_proof} verifies that its implied velocity recovers the fixed-time population-optimal flow-matching target. The distinction appears in finite training: one network shares parameters across all timesteps, while $a(t)$ gives errors near $t=1$ greater weight and multiplies their parameter gradients. The conversion can therefore change how finite model capacity is allocated across timesteps and alter the stochastic-gradient sequence received by the optimizer.

\paragraph{Timestep sampling.}
At each training step, we sample a data--condition pair $(X,C)$, noise $E$, and a timestep $T$ from a density $q$ on $(0,1)$, then form $Z_T=TX+(1-T)E$ and evaluate the selected loss. Equivalently, for a small interval around $t$,
\begin{equation}
T\sim q,\qquad
\Pr(T\in[t,t+\mathrm{d}t])\approx q(t)\,\mathrm{d}t.
\label{eq:timestep_sampling}
\end{equation}
Thus $q(t)$ controls how frequently training visits each timestep region.

\paragraph{Optimizer-facing second moment.}
The random parameter gradients presented to the optimizer and the conditional aligned-gradient moment are
\begin{equation}
\hat g_x=g_x(T),\qquad
\hat g_v=a(T)g_x(T),\qquad
G_\theta(t):=\Ex\norm{g_x(t)}^2.
\label{eq:optimizer_gradients}
\end{equation}
Conditional on sampling near $t$, the aligned and mismatched gradient second moments are $G_\theta(t)$ and $a(t)^2G_\theta(t)$, respectively. Their local contributions to the timestep-integrated second moment are therefore
\begin{equation}
\mathrm{d}M_x(t)=q(t)G_\theta(t)\,\mathrm{d}t,
\qquad
\mathrm{d}M_v(t)=q(t)a(t)^2G_\theta(t)\,\mathrm{d}t.
\label{eq:local_second_moment}
\end{equation}
Integrating these local contributions gives the exact total moments
\begin{equation}
M_x(q)=\Ex\norm{\hat g_x}^2=\int_0^1q(t)G_\theta(t)dt,
\qquad
M_v(q)=\Ex\norm{\hat g_v}^2=\int_0^1q(t)a(t)^2G_\theta(t)dt.
\label{eq:optimizer_second_moments}
\end{equation}
The total second moment characterizes the aggregate magnitude of raw stochastic gradients over timestep sampling. Its divergence identifies a failure of finite-second-moment regularity, with rare endpoint samples contributing increasingly large gradients. For Uniform sampling, $q(t)=1$ and mismatch inserts the factor $a(t)^2$ directly. A boundary-suppressing sampler can instead make the integral finite by reducing endpoint exposure, without removing the pointwise conversion factor.

\subsection{Initialization regimes}

Our primary question is how prediction--loss mismatch affects a flow-matching network trained from scratch. We therefore begin at the initialized parameter state and characterize the pre-optimizer gradient before training changes the prediction residual. This requires making the initialized prediction head explicit. Let $\theta_0$ denote the initialized parameters,
\begin{equation}
r_0(t):=\hat X_{\theta_0}(Z_t,t)-X,
\qquad
J_0(t):=\nabla_\theta\hat X_{\theta_0}(Z_t,t).
\label{eq:init_objects}
\end{equation}

\begin{assumption}[Initialization regimes]
\label{ass:init_regimes}
At the fixed initialized state $\theta_0$, one of the following regimes holds throughout $t\in(t_0,1)$ for some $t_0\in(0,1)$ and $c_J>0$:
\begin{description}
    \item[Regime A (direct head).] The predictor has no end-to-end identity skip and
    $\Ex\norm{J_0(t)^\top r_0(t)}^2\ge c_J$.
    \item[Regime B (identity residual).] The residual branch is zero-initialized, so
    $\hat X_{\theta_0}(Z_t,t)=Z_t$, and
    $\Ex\norm{J_0(t)^\top r_0(t)/(1-t)}^2\ge c_J$.
\end{description}
\end{assumption}

The two regimes imply the following lower bounds on the initialized aligned-gradient moment:
\begin{equation}
G_{\theta_0}(t)\ge
\begin{cases}
4c_J, & \text{Regime A},\\
4c_J(1-t)^2, & \text{Regime B}.
\end{cases}
\label{eq:init_orders}
\end{equation}

\paragraph{Proof sketch.}
Since $G_{\theta_0}(t)=4\Ex\norm{J_0(t)^\top r_0(t)}^2$, Regime A follows immediately. In Regime B,
\begin{align}
r_0(t)
&=Z_t-X=tX+(1-t)E-X=(1-t)(E-X),\\
G_{\theta_0}(t)
&=4(1-t)^2\Ex\norm{J_0(t)^\top(E-X)}^2
\ge4c_J(1-t)^2.
\end{align}
This proves Equation~\eqref{eq:init_orders}. The conditions concern only the pre-optimizer gradient at initialization.

\paragraph{A checkable output-head condition.}
An additive output bias provides a sufficient condition without requiring every network direction to be nondegenerate. Write its contribution as $Sb$, where $b$ is a trainable bias vector and $S$ is its fixed replication map across output coordinates. The bias block alone gives
\begin{equation}
G_{\theta_0}(t)\ge\Ex\norm{\nabla_b\ell_x(t)}^2
=4\Ex\norm{S^\top r_0(t)}^2.
\label{eq:bias_certificate}
\end{equation}
For a zero-initialized direct output, this is the time-independent lower bound $4\Ex\norm{S^\top X}^2$, positive whenever the projected data are nonzero with positive probability. In the identity-residual case, independent Gaussian noise makes $\Ex\norm{S^\top(E-X)}^2>0$ for every nonzero $S$. The JiT implementation uses a zero-initialized patch output layer with a shared bias, so this check concerns sums of target patches. Appendix~\ref{app:init_head_conditions} derives both cases and states the corresponding endpoint condition for the nonzero-initialized U-Net head.

\section{Terminal Amplification}
\label{sec:theory}

\subsection{Sampler-dependent divergence at initialization}
\label{sec:init_divergence}

Substituting Equation~\eqref{eq:init_orders} into the exact moment identity~\eqref{eq:optimizer_second_moments} gives the endpoint criterion below.

\begin{theorem}[Sampler-dependent initialization-time divergence]
\label{thm:init_divergence}
Suppose Assumption~\ref{ass:init_regimes} holds and training timesteps have density $q$. Then, for every $b\in(t_0,1)$,
\begin{equation}
M_v(q)
=\int_0^1q(t)(1-t)^{-4}G_{\theta_0}(t)dt
\ge4c_J\int_{t_0}^{b}q(t)(1-t)^{-p}dt,
\label{eq:init_gradient_orders}
\end{equation}
where $p=4$ in Regime A and $p=2$ in Regime B. More explicitly, suppose that for some $t_1\in[t_0,1)$, $c_q>0$, and $\beta\ge0$,
\begin{equation}
q(t)\ge c_q(1-t)^\beta
\quad\text{for a.e. }t\in[t_1,1).
\label{eq:sampler_endpoint_lower_bound}
\end{equation}
If $\beta\le p-1$, then $M_v(q)=\infty$. Hence Uniform sampling, for which $q_{\mathrm U}(t)=1$ and $\beta=0$, diverges in both regimes.
\end{theorem}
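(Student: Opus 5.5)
The plan has three steps: obtain the general lower bound by restricting the moment integral to an endpoint window, then specialize it to samplers satisfying \eqref{eq:sampler_endpoint_lower_bound}, and finally read off the Uniform case.

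\textbf{Step 1: the general lower bound.} Start from the exact identity \eqref{eq:optimizer_second_moments} with $a(t)^2=(1-t)^{-4}$. This gives the stated equality $M_v(q)=\int_0^1 q(t)(1-t)^{-4}G_{\theta_0}(t)\,dt$.

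The integrand is nonnegative, since $q\ge0$ and $G_{\theta_0}\ge0$. Hence for any $b\in(t_0,1)$ the integral dominates its restriction to $(t_0,b)$.

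On that window, Assumption~\ref{ass:init_regimes} holds, so the lower bounds \eqref{eq:init_orders} apply:
\begin{itemize}
\item In Regime A, $G_{\theta_0}(t)\ge 4c_J$, so the integrand is at least $4c_Jq(t)(1-t)^{-4}$.
\item In Regime B, $G_{\theta_0}(t)\ge 4c_J(1-t)^2$, so the integrand is at least $4c_Jq(t)(1-t)^{-2}$.
\end{itemize}
This yields \eqref{eq:init_gradient_orders} with $p=4$ or $p=2$ respectively.

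\textbf{Step 2: divergence under the sampler condition.} Assume \eqref{eq:sampler_endpoint_lower_bound} holds with $\beta\le p-1$. Since $t_1\ge t_0$, for every $b\in(t_1,1)$ the bound from Step 1 gives
\[
M_v(q)\ge 4c_J\int_{t_1}^{b}q(t)(1-t)^{-p}\,dt\ge 4c_Jc_q\int_{t_1}^{b}(1-t)^{\beta-p}\,dt.
\]
The exponent satisfies $\beta-p\le -1$. Substituting $s=1-t$ turns the last integral into $\int_{1-b}^{1-t_1}s^{\beta-p}\,ds$. As $b\uparrow1$, this tends to $+\infty$:
\begin{itemize}
\item if $\beta-p=-1$, it equals $\log\frac{1-t_1}{1-b}$, which diverges logarithmically;
\item if $\beta-p<-1$, it equals $\frac{(1-b)^{\beta-p+1}-(1-t_1)^{\beta-p+1}}{p-\beta-1}$, which diverges like a power.
\end{itemize}
The left-hand side $M_v(q)$ does not depend on $b$, so $M_v(q)=\infty$.

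The one technical point is that \eqref{eq:init_gradient_orders} is stated from $t_0$ rather than $t_1$. This is handled by nonnegativity: dropping the piece over $(t_0,t_1)$ only decreases the integral. Alternatively, one can invoke monotone convergence on $\mathbf 1_{(t_1,b)}$.

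\textbf{Step 3: the Uniform case.} Uniform sampling has $q_{\mathrm U}\equiv1$, which satisfies \eqref{eq:sampler_endpoint_lower_bound} with $c_q=1$, $\beta=0$, and any $t_1\in[t_0,1)$. The condition $\beta=0\le p-1$ holds for both $p=4$ and $p=2$, so Step 2 gives divergence in both regimes.

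There is no genuine obstacle: the argument reduces to nonnegativity together with the non-integrability of $s^{-\gamma}$ at $0$ for $\gamma\ge1$. The only care needed is in ordering the limit $b\uparrow1$ after the bound, and in noting that the bounds in Assumption~\ref{ass:init_regimes} hold pointwise in $t$ on the whole window. Measurability of $G_{\theta_0}$ is implicit in the identity \eqref{eq:optimizer_second_moments}.
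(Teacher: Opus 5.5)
Your proposal is correct and follows the paper's proof: substitute the initialization lower bounds into the exact moment identity, restrict by nonnegativity to $(t_1,b)$, apply the sampler lower bound, and let $b\uparrow1$. The paper concludes by monotone convergence, whereas you conclude because $M_v(q)$ does not depend on $b$; these are the same argument.
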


The proof is in Appendix~\ref{app:proof_divergence}. The thresholds are $\beta\le3$ in Regime A and $\beta\le1$ in Regime B, obtained from lower bounds on the conditional mismatch moment of orders $(1-t)^{-4}$ and $(1-t)^{-2}$, respectively. These are sufficient divergence conditions; larger $\beta$ alone does not establish integrability. At a later parameter state, Equation~\eqref{eq:optimizer_second_moments} remains exact, but integrability depends on the learned $G_\theta(t)$.

\subsection{Alignment and endpoint-suppressing sampling}
\label{sec:alignment_sampling}

\paragraph{Prediction--loss alignment.}
For an $x$-prediction head, alignment removes the conversion weight from training: $\ell_x(t)=(1-t)^2\ell_v(t)$ and
\begin{equation}
M_x(q)=\int_0^1q(t)G_\theta(t)dt.
\label{eq:aligned_identity_moment}
\end{equation}

\begin{proposition}[Aligned objective]
\label{prop:aligned_regularity}
If $C_G(\theta):=\operatorname*{ess\,sup}_{t\in(0,1)}G_\theta(t)<\infty$, then for every timestep density $q$,
\begin{equation}
M_x(q)\le C_G(\theta)<\infty.
\label{eq:aligned_bound}
\end{equation}
\end{proposition}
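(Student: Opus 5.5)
The plan is to bound the integrand pointwise almost everywhere and then use that $q$ integrates to one; no endpoint analysis is needed because alignment leaves no conversion weight. Start from the exact identity~\eqref{eq:aligned_identity_moment}, equivalently the first half of~\eqref{eq:optimizer_second_moments}:
\[
M_x(q)=\Ex\norm{\hat g_x}^2=\int_0^1 q(t)G_\theta(t)\,dt .
\]
This identity is just the tower property. Condition on $T=t$, which is independent of $(X,C,E)$ and of any model randomness. The conditional second moment is then $G_\theta(t)$, and Tonelli's theorem justifies the interchange because the integrand is nonnegative. In particular, the identity holds in $[0,\infty]$ with no prior finiteness assumption.

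Next, by definition of the essential supremum, the set $N=\{t\in(0,1): G_\theta(t)>C_G(\theta)\}$ has Lebesgue measure zero. The law of $T$ has a density $q$ and is therefore absolutely continuous, so $N$ also carries zero $q$-mass. This is the only point that needs care. It is the reason the hypothesis is stated with $\operatorname*{ess\,sup}$ and the reason the bound holds for every \emph{density} $q$, though not for arbitrary timestep laws with atoms on $N$. Consequently $q(t)G_\theta(t)\le q(t)C_G(\theta)$ for a.e.\ $t$.

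Integrating this inequality gives
\[
M_x(q)\le C_G(\theta)\int_0^1 q(t)\,dt=C_G(\theta)<\infty ,
\]
which is~\eqref{eq:aligned_bound}. I would close with a remark contrasting this with Theorem~\ref{thm:init_divergence}. The same argument applied to $M_v$ would require $\operatorname*{ess\,sup}_t a(t)^2G_\theta(t)<\infty$. That condition fails in both initialization regimes by~\eqref{eq:init_orders}. Alignment succeeds because the factor $a(t)^2$ is simply absent from the integrand, so no sampler-dependent suppression of the endpoint $t\to1$ is needed.
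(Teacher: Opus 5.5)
Your proposal is correct and matches the paper's proof, which is the one-line bound $M_x(q)=\int_0^1 q(t)G_\theta(t)\,dt\le C_G(\theta)\int_0^1 q(t)\,dt=C_G(\theta)$. Your extra steps make explicit what the paper leaves implicit: Tonelli justifies the identity in $[0,\infty]$, and the Lebesgue-null exceptional set of the essential supremum has zero $q$-mass because $T$ has a density.
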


The bound follows immediately from $\int_0^1q(t)dt=1$. Appendix~\ref{app:alignment_proof} gives sufficient Jacobian--residual moment conditions and analogous bounds for aligned velocity MSE and BCE-with-logits. The implied velocity in Equation~\eqref{eq:implied_velocity} is still used at inference.

\paragraph{Logit-Normal endpoint suppression.}
Let $U\sim\mathcal N(m,s^2)$ with $m\in\mathbb R$ and $s>0$, and set $T=\sigma(U)=(1+e^{-U})^{-1}$. The induced density on $(0,1)$ is
\begin{equation}
q_{\mathrm{LN}}(t)
=\frac{1}{s\sqrt{2\pi}\,t(1-t)}
\exp\left[-\frac{(\operatorname{logit}(t)-m)^2}{2s^2}\right].
\label{eq:lognormal_density}
\end{equation}

\begin{proposition}[Logit-Normal integrability under polynomial endpoint growth]
\label{prop:lognormal_integrability}
Fix $\theta$. Suppose that for some $t_0\in(0,1)$, $C<\infty$, and $p\ge0$,
\begin{equation}
G_\theta(t)\le C(1-t)^{-p},\qquad t\in[t_0,1),
\label{eq:lognormal_growth}
\end{equation}
and $\int_0^{t_0}q_{\mathrm{LN}}(t)(1-t)^{-4}G_\theta(t)dt<\infty$. Then $M_v(q_{\mathrm{LN}})<\infty$.
\end{proposition}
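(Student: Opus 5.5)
We split $M_v(q_{\mathrm{LN}})=\int_0^{t_0}+\int_{t_0}^1$ in Equation~\eqref{eq:optimizer_second_moments}. The first piece is finite by hypothesis. On $[t_0,1)$ we use the growth bound~\eqref{eq:lognormal_growth} to reduce the claim to showing
\begin{equation*}
\int_{t_0}^1 q_{\mathrm{LN}}(t)(1-t)^{-(4+p)}\,dt<\infty .
\end{equation*}
This is a purely deterministic statement about the sampler. It says that the Logit-Normal density decays at the clean endpoint faster than any power of $(1-t)$.

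To prove it, we change variables back to the Gaussian. Set $t=\sigma(u)$, so that $q_{\mathrm{LN}}(t)\,dt=\phi_{m,s}(u)\,du$, where $\phi_{m,s}$ is the $\mathcal N(m,s^2)$ density. We also have $1-t=\sigma(-u)=(1+e^{u})^{-1}$, hence $(1-t)^{-1}=1+e^{u}\le 2e^{u}$ for $u\ge0$. With $u_0=\operatorname{logit}(t_0)$, the integral becomes
\begin{equation*}
\int_{u_0}^{\infty}\phi_{m,s}(u)\,(1+e^{u})^{4+p}\,du\;\le\;2^{4+p}\,\Ex\!\left[e^{(4+p)U}\right]+C',
\end{equation*}
where the constant $C'$ absorbs the bounded region $u\in[u_0,0]$ if $u_0<0$. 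The Gaussian moment generating function is finite, $\Ex[e^{\lambda U}]=\exp(\lambda m+\lambda^2s^2/2)$, so the bound is finite for every $p\ge0$. This completes the argument.

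The main care point is only bookkeeping. The $1/(t(1-t))$ prefactor in~\eqref{eq:lognormal_density} is exactly the Jacobian $d u/d t$, so it cancels under the substitution and must not be counted separately as an extra singularity. An alternative is to stay in $t$ and observe that $\exp[-(\operatorname{logit}t-m)^2/(2s^2)]$ is $\exp(-\Theta(\log^2(1-t)))$, which dominates any polynomial $(1-t)^{-k}$. The substitution route is cleaner and gives an explicit bound, so we will use it.
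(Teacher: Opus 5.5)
Your proposal is correct and follows the paper's proof essentially step for step: you split off the assumed-finite nonterminal piece, bound the terminal integrand by $C\,q_{\mathrm{LN}}(t)(1-t)^{-(p+4)}$, substitute $t=\sigma(u)$ so that $(1-t)^{-1}=1+e^{u}$, and conclude from finiteness of the Gaussian moment-generating function. Your explicit treatment of the region $u\in[u_0,0]$ and your remark that the $1/(t(1-t))$ prefactor is exactly the Jacobian are slightly more careful bookkeeping than the paper's ``for sufficiently large $U$'' phrasing.
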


\paragraph{Proof sketch.}
The logistic change of variables $t=\sigma(u)$ gives
\begin{equation}
\begin{aligned}
&\int_{t_0}^1q_{\mathrm{LN}}(t)(1-t)^{-4}G_\theta(t)dt \\
&\quad\le C\,\mathbb E\left[(1-\sigma(U))^{-(p+4)}\mathbf 1\{U\ge\operatorname{logit}(t_0)\}\right].
\end{aligned}
\label{eq:lognormal_tail_bound}
\end{equation}
Because $(1-\sigma(U))^{-1}=1+e^U$ and the Gaussian moment-generating function satisfies $\mathbb E[e^{rU}]=e^{rm+r^2s^2/2}<\infty$ for every finite $r$, the right-hand side is finite. Adding the assumed nonterminal integral proves the proposition. In particular, $C_G(\theta)<\infty$ implies
\begin{equation}
M_v(q_{\mathrm{LN}})
\le C_G(\theta)\,\mathbb E[(1-\sigma(U))^{-4}]<\infty.
\label{eq:lognormal_bound}
\end{equation}
The full proof is in Appendix~\ref{app:lognormal}. Alignment removes the mismatch factor pointwise; Logit-Normal sampling retains it but can make its integrated contribution finite by suppressing the endpoint.

\section{Experiments}
\label{sec:experiments}

Guided by our theoretical analysis, we organize the experiments around two questions. \textbf{(i) Endpoint amplification and training quality:} Does prediction--loss mismatch produce the endpoint gradient amplification predicted by the theory, and how does this amplification manifest in model fitting and generation quality? Section~\ref{sec:empirical_amplification} examines these questions through direct gradient measurements and controlled training interventions. \textbf{(ii) Alignment for sampler-robust training:} Does prediction--loss alignment support effective training under both Uniform and Logit-Normal sampling, and can Logit-Normal sampling mitigate the training failure observed under mismatch? Section~\ref{sec:empirical_robustness} evaluates both effects through objective--sampler comparisons. Full experimental protocols and supplementary MIMO results are provided in Appendices~\ref{app:experimental_details} and~\ref{app:mimo}.

\subsection{Endpoint exposure and gradient amplification}
\label{sec:empirical_amplification}

We first tested the predicted growth of gradient second moments as the timestep approached the endpoint at initialization. We then evaluated how restricting the corresponding endpoint weighting affected generation quality.

\paragraph{Direct gradient measurements at initialization.}
To test the two initialization regimes, we used a two-hidden-layer gated MLP with a zero-initialized output layer as a direct signal predictor (Regime A). Adding an identity skip from the input gave a residual predictor (Regime B), so the two networks initially produced $0$ and $Z_t$, respectively. Both used paired branch parameters and identical data--noise pairs; implementation details are provided in Appendix~\ref{app:toy_initialization}.

For each frozen network, we independently backpropagated the aligned and mismatched losses to measure individual-example gradient second moments. Table~\ref{tab:toy_initialization} shows that mismatched moments grew sharply as $\epsilon=1-t$ decreased in both regimes, whereas aligned moments remained approximately constant in A and decreased in B. In Regime B, the approximately quadratic decay of the aligned moment was insufficient to offset the mismatch multiplier. These measurements supported the predicted endpoint amplification in both initialization regimes and its removal by alignment over the tested endpoint distances.

\begin{table}[hbt!]
\small
\centering
\caption{Directly measured gradient second moments at initialization, with mismatch as the tested objective and alignment as the control.}
\label{tab:toy_initialization}
\setlength{\tabcolsep}{5pt}
\renewcommand{\arraystretch}{1.15}
\begin{tabular}{lccc}
\toprule
Initialization / loss & $\epsilon=10^{-2}$ & $\epsilon=10^{-4}$ & $\epsilon=10^{-6}$ \\
\midrule
A: mismatch & $(2.92\pm0.03)\times10^{7}$ & $(2.93\pm0.03)\times10^{15}$ & $(2.93\pm0.03)\times10^{23}$ \\
A: aligned & $0.292\pm0.003$ & $0.293\pm0.003$ & $0.293\pm0.003$ \\
\midrule
B: mismatch & $(5.93\pm0.05)\times10^{3}$ & $(5.95\pm0.05)\times10^{7}$ & $(5.95\pm0.05)\times10^{11}$ \\
B: aligned & $(5.93\pm0.05)\times10^{-5}$ & $(5.95\pm0.05)\times10^{-9}$ & $(5.95\pm0.05)\times10^{-13}$ \\
\bottomrule
\end{tabular}

\end{table}

\Needspace{12\baselineskip}
\begin{wraptable}[11]{r}{0.43\textwidth}
\vspace{-0.5\baselineskip}
\centering
\caption{Binary-MNIST endpoint-cap sweep. Weight moments are analytical; FID-50K is mean $\pm$ s.d. over three seeds.}
\label{tab:bmnist_cap}
\small
\setlength{\tabcolsep}{4pt}
\begin{tabular}{ccc}
\toprule
Cap $C$ & $\Ex\widetilde w_C^2$ & FID-50K \\
\midrule
$10^2$ & $3.69$ & $16.02\pm2.13$ \\
$10^4$ & $33.67$ & $27.18\pm4.25$ \\
$10^6$ & $333.67$ & $614.37\pm10.10$ \\
$10^8$ & $3333.67$ & $1625.48\pm70.82$ \\
\bottomrule
\end{tabular}
\end{wraptable}

\paragraph{Endpoint weighting and generation quality.}
Following the initialization-time endpoint measurements, we examined whether restricting the endpoint weighting responsible for mismatch amplification improved generation quality. We trained a conditional U-Net on Binary MNIST under Uniform sampling, capping the inverse-square mismatch weight at $C$ and normalizing it to unit mean. Smaller caps imposed stronger restrictions on endpoint weighting, while the normalization held its mean scale fixed (Appendix~\ref{app:cap_sweep}).

Table~\ref{tab:bmnist_cap} shows that increasing $C$ from $10^2$ to $10^8$ raised the weight second moment approximately $10^3$-fold and worsened FID from $16.02$ to $1625.48$. Stronger restrictions therefore yielded substantially better generation, even though all tested weights were bounded. This supported restricting endpoint weighting as an effective intervention in this setting: the strength of the retained weighting mattered for generation quality beyond its numerical finiteness. The cap jointly changed the strength of endpoint weighting and the time profile of supervision.
\WFclear

\subsection{Prediction--loss alignment for sampler-robust training}
\label{sec:empirical_robustness}

To test the predicted distinction between suppressing endpoint exposure and removing conversion-induced amplification, we compared timestep sampling and prediction--loss alignment in three settings. A Gaussian MLP examined the sampler-dependent gradient behavior of the mismatched objective. Conditional U-Net and JiT-B/4 experiments then tested whether this sensitivity appeared in image generation, and whether alignment supported effective training under both Uniform and Logit-Normal sampling. Together, these comparisons evaluated the practical claim of sampler-robust training across binary and continuous data.

\Needspace{13\baselineskip}
\begin{wrapfigure}[13]{r}{0.60\textwidth}
\vspace{-0.6\baselineskip}
\centering
\includegraphics[width=\linewidth]{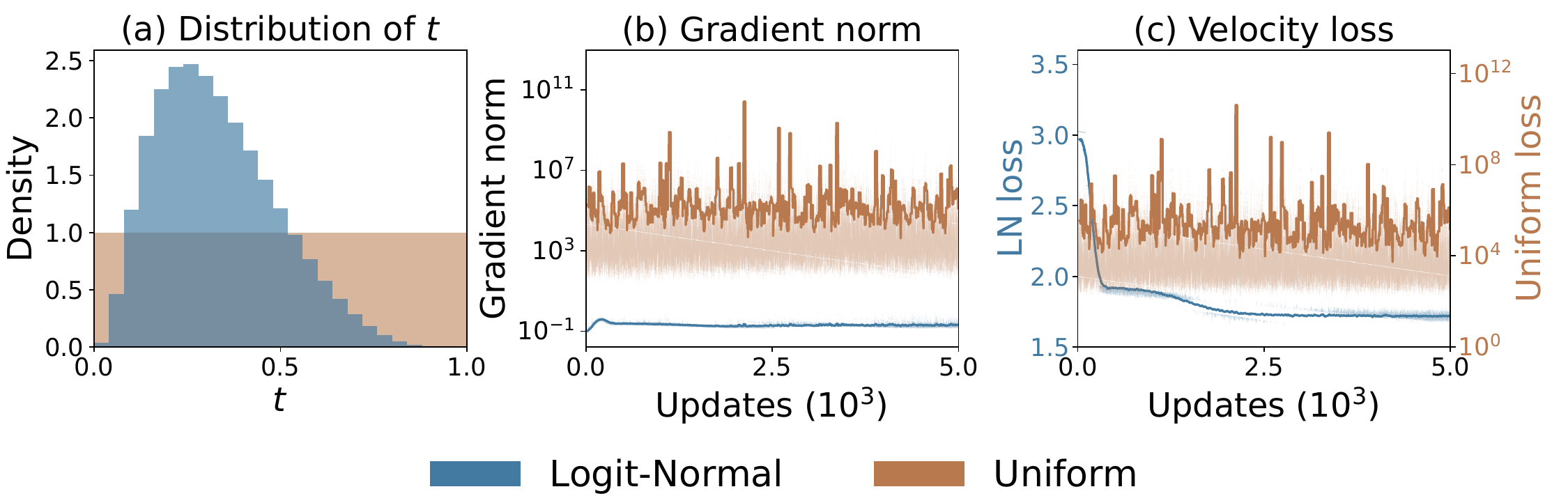}
\caption{Gaussian sampler control. (a) Sampler densities. (b,c) Raw trajectories (faint) and 25-update means (solid) across three paired seeds; loss axes are sampler-specific.}
\label{fig:toy_gaussian}
\end{wrapfigure}

\begingroup
\raggedright
\paragraph{Sampler control.}
To examine whether sampling moderated the gradients produced by the singular conversion, we held the Gaussian MLP and raw mismatched objective fixed and paired initialization and data--noise streams across samplers. Median gradient norms were of order $10^3$ under Uniform and $10^{-1}$ under Logit-Normal, with much larger Uniform excursions (Figure~\ref{fig:toy_gaussian}). This contrast supported the predicted role of endpoint exposure: changing how often terminal timesteps were sampled substantially moderated the observed gradients while retaining the mismatched conversion. Appendix~\ref{app:toy_gaussian} gives the protocol.
\par\WFclear
\endgroup

\begin{figure}[!t]
\centering
\includegraphics[width=0.94\linewidth,trim=4.7bp 8.5bp 6.1bp 8.5bp,clip]{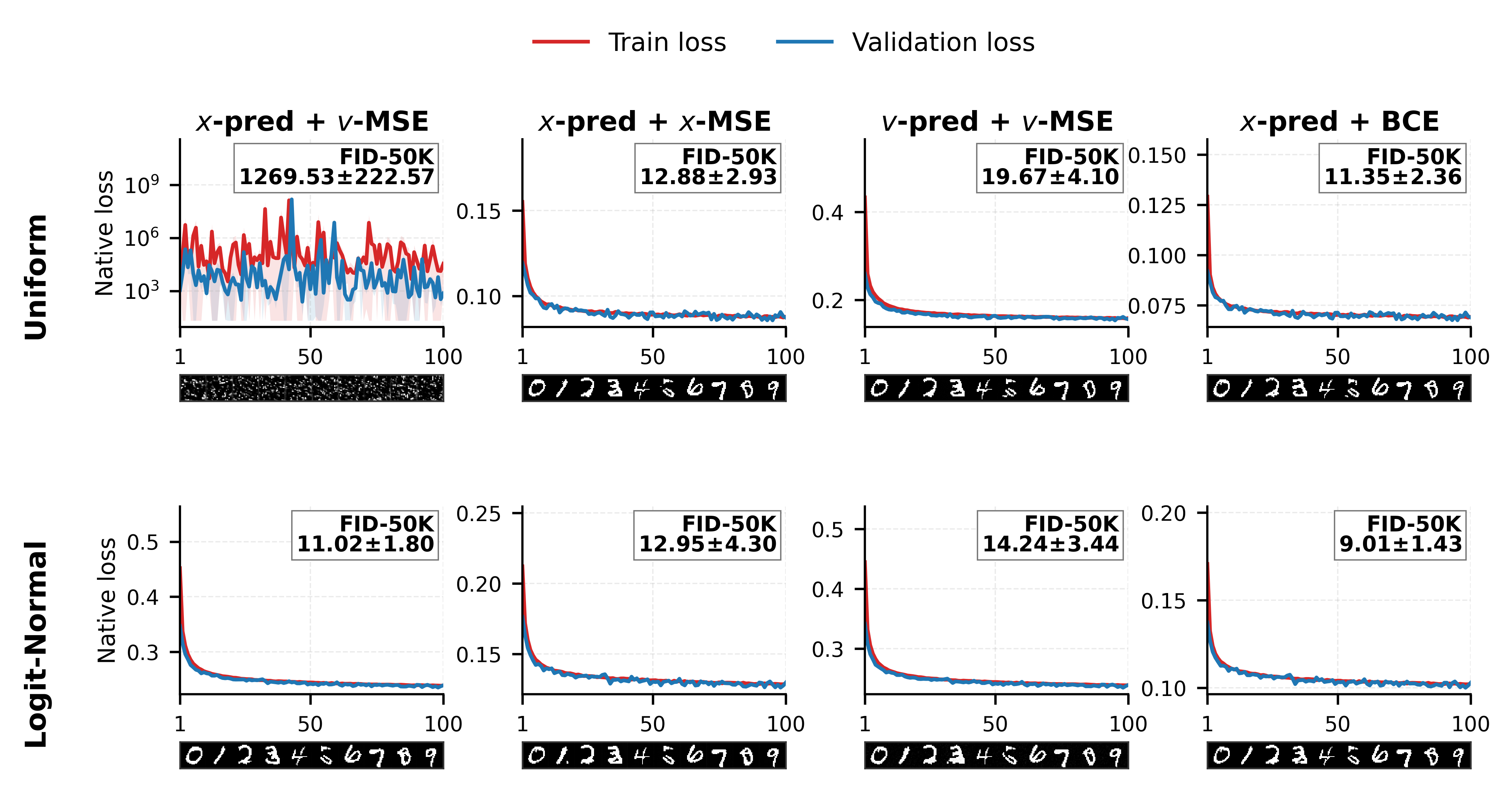}
\caption{Binary-MNIST objective--sampler matrix: FID-50K and curves summarize three training seeds (mean $\pm$ s.d.), with samples below. Rows vary samplers; columns vary prediction--loss pairings. Native losses are comparable only within each objective.}
\label{fig:bmnist_corrected}
\end{figure}

\paragraph{Effect of timestep sampling.}
To test whether sampler sensitivity also appeared in generation quality, we evaluated a conditional U-Net \citep{ronneberger2015unet} on Binary MNIST \citep{lecunmnist} and JiT-B/4 \citep{li2025back} on Tiny-ImageNet. Each within-backbone comparison changed the sampler while retaining the prediction--loss pairing and evaluation protocol. Under mismatched $x$-prediction+$v$-MSE, replacing Uniform with Logit-Normal sampling reduced FID-50K \citep{heusel2017gans} from $1269.53\pm222.57$ to $11.02\pm1.80$ for U-Net and from $390.50$ to $14.56$ for JiT.

Figures~\ref{fig:bmnist_corrected} and~\ref{fig:jit_dynamics} show generation collapse under Uniform and recognizable samples under Logit-Normal, together with the corresponding loss trajectories. The improvement in both backbones supported boundary-suppressing sampling as an effective remedy for the observed mismatch-related training difficulties. It also illustrated how a recipe with strong endpoint amplification could nevertheless achieve good generation quality under a suitable sampler. The JiT comparison retained the same small denominator floor in both sampler cells. Full protocols are given in Appendices~\ref{app:bmnist_protocol} and~\ref{app:jit_details}.

\Needspace{5\baselineskip}
\paragraph{Effect of prediction--loss alignment.}
To test alignment as a remedy that did not rely on boundary-suppressing sampling, we replaced velocity MSE with signal MSE while retaining the direct $x$-prediction backbone under each sampler. Aligned U-Net obtained FID $12.88\pm2.93$ under Uniform and $12.95\pm4.30$ under Logit-Normal sampling. Aligned JiT obtained $15.85$ and $24.32$, respectively. Both aligned configurations therefore produced useful samples under Uniform as well as Logit-Normal sampling, in contrast to the strong sampler dependence of mismatch. The complete objective--sampler matrices also included direct $v$-MSE and, for U-Net, aligned BCE; both alternatives remained trainable under the two samplers. These within-backbone comparisons supported alignment's reduced sensitivity to sampler replacement across the evaluated continuous and binary generation tasks.

\paragraph{Generation quality and sampler robustness.}
JiT achieved its best observed FID with mismatch plus Logit-Normal sampling. Alignment's advantage was effective generation under both tested samplers, whereas mismatch depended strongly on sampler choice. Taken together, these results supported the practical benefit of removing the conversion-induced amplification: effective training was maintained across the two tested timestep distributions. The lower FID of mismatch plus Logit-Normal further showed that best-case generation quality and robustness to sampler replacement were distinct aspects of the training recipe.

\begin{figure}[!t]
\centering
\includegraphics[width=\linewidth,trim=5.8bp 5.2bp 6.5bp 8.5bp,clip]{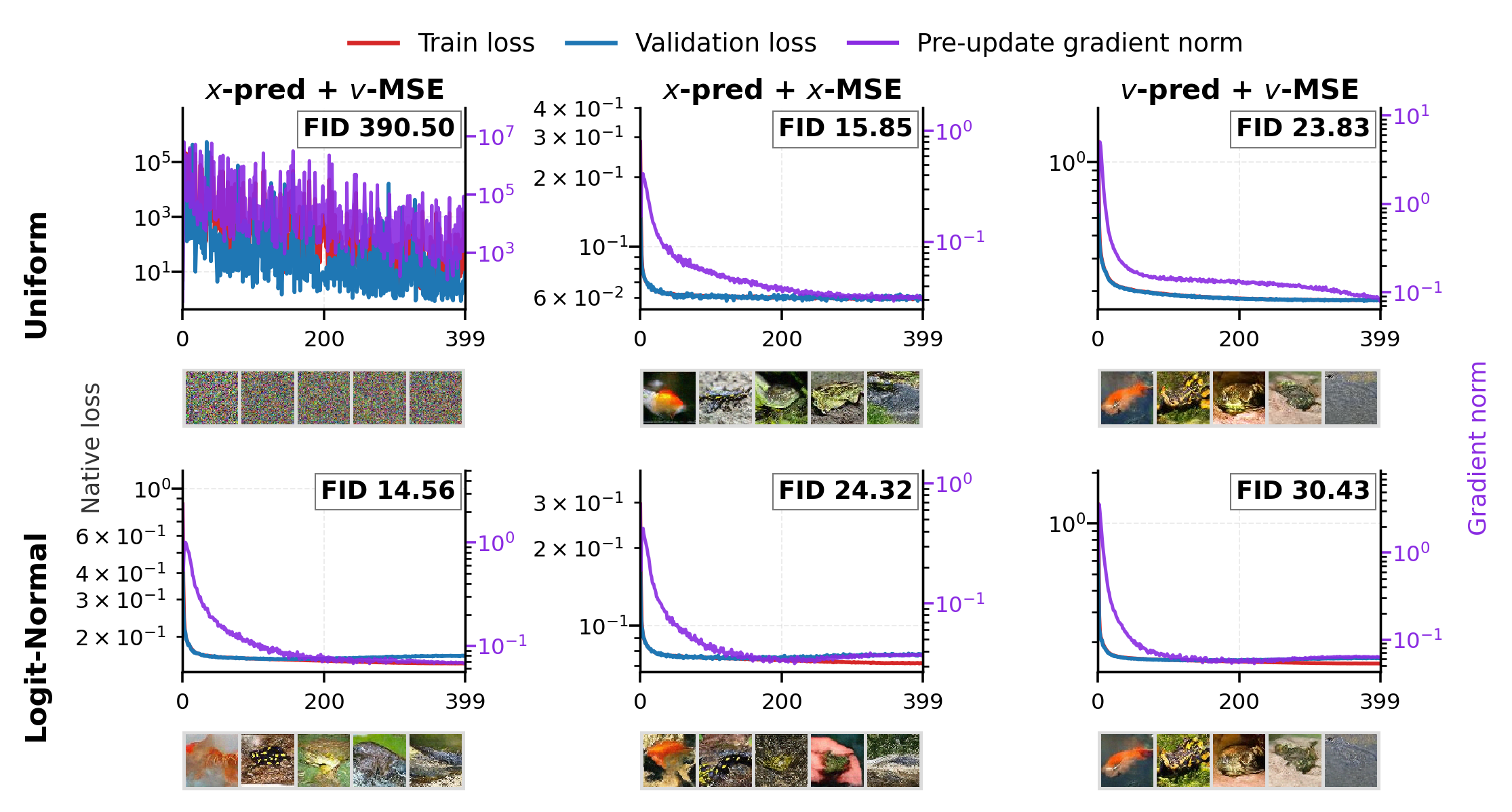}
\caption{JiT-B/4 objective--sampler matrix. Each cell reports native train/validation losses, mean pre-update full-parameter gradient norm, FID-50K, and the first row of its epoch-400 sample grid. Each cell is one run. Native loss values are comparable within, not across, objectives.}
\label{fig:jit_dynamics}
\end{figure}

\paragraph{Supplementary training diagnostics.}
The paired JiT early-training comparison retained the initialization and data--noise--timestep stream while changing the objective. Raw mismatch produced a median pre-update gradient norm of order $10^2$, compared with $10^{-1}$ for alignment, and retained higher signal MSE (Appendix~\ref{app:jit_early}). Frozen trained-checkpoint measurements additionally showed endpoint gradient amplification on the evaluated timestep grid (Appendix~\ref{app:frozen_endpoint}). These measurements complemented the initialization analysis by showing that endpoint amplification was also observable during early fitting and at the evaluated trained states, connecting the local gradient diagnostics with the broader training comparisons.

\FloatBarrier
\Needspace{6\baselineskip}
\section{Discussion and Limitations}
\label{sec:limitations}

\paragraph{Why a singular recipe can still work.}
The conversion amplifies gradients at a given timestep; the sampler determines how often training encounters it. Under the growth conditions in Section~\ref{sec:alignment_sampling}, endpoint suppression makes the aggregate second moment finite despite the singular pointwise multiplier. The controlled comparisons illustrated how these effects can coexist: changing the sampler restored useful generation without removing the conversion.

\paragraph{Generation quality and sampler robustness.}
The JiT results suggest evaluating both the best observed FID and sensitivity to sampler replacement. Alignment removes the conversion multiplier, yielding the uniform-in-sampler moment bound in Proposition~\ref{prop:aligned_regularity}. It retains the prediction architecture and ODE velocity formula, but can change the learned predictor and final sample quality. Its practical value therefore need not coincide with the lowest FID under a particular sampler.

\paragraph{Beyond finite second moments.}
The endpoint-cap sweep shows why integrability is a starting point rather than a complete account of training quality. All tested caps bounded the conversion weight; at a fixed state with bounded $G_\theta(t)$, this also bounds the gradient second moment. Nevertheless, larger caps produced substantially worse generation. Thus, the strength of the retained endpoint weighting matters in addition to whether it is bounded. Because capping changes both the objective's time profile and the stochastic-gradient sequence, these results motivate examining their joint effects on fitting and generation quality.

\paragraph{Implications for sampler design and reporting.}
Timestep sampling should be interpreted together with the prediction--loss parameterization. For fixed $\theta$ and $T\sim q$ sampled independently of the data and noise, Equation~\eqref{eq:formal_conversion} gives
\begin{equation}
\Ex[\ell_v(T)]=\int_0^1 \frac{q(t)}{(1-t)^2}\,\Ex[\ell_x(t)]\,dt,
\qquad
\Ex[\ell_x(T)]=\int_0^1 q(t)\,\Ex[\ell_x(t)]\,dt.
\label{eq:discussion_time_weight}
\end{equation}
The outer expectations include timestep sampling; the fixed-$t$ expectations follow Section~3.1. Mismatch therefore carries the signal-space time weight $q(t)a(t)$, whereas alignment retains $q(t)$ alone. Sampling remains $T\sim q$; only the objective weighting changes. We view aligned signal space as a useful common reference for future sampler studies: removing the conversion weight makes the sampler's allocation of signal-space supervision easier to interpret.

In practice, evaluations should report the sampler, endpoint truncation or denominator handling, and pre-clipping gradient diagnostics alongside generation metrics. This distinguishes a favourable sampler--objective combination from robustness to endpoint exposure. With conversion-induced amplification removed, learning-rate selection and optimizer tuning remain part of the training recipe.

\paragraph{Limitations.}
The divergence theorem is a sufficient result at initialization under the stated output-gradient conditions, rather than a convergence theorem for an optimizer. The frozen diagnostic covers two trained checkpoints and a finite set of inputs and timesteps; it cannot certify trained-state asymptotics. The full JiT matrix has one run per cell, whereas Binary MNIST used three training seeds and a dataset-specific FID feature extractor. Finally, the formal results concern the stated linear Gaussian interpolation. Other paths require analyzing their own prediction-to-velocity conversion.

\section{Conclusion}

We investigated how $x$-prediction+$v$-loss can work well despite its singular endpoint amplification. Under the stated initialization conditions, its stochastic-gradient second moment diverges under Uniform sampling; under the stated growth conditions, boundary-suppressing sampling restores integrability without removing pointwise amplification. Alignment removes the conversion factor and, under a uniform moment bound, yields a finite second moment for every timestep density. Continuous and binary experiments corroborated mismatch's sampler sensitivity and aligned objectives' trainability across the tested samplers. These findings reconcile amplification with empirical success and support alignment as a principled route to sampler-robust training.

\Needspace{8\baselineskip}
\subsection*{AI use statement}

Generative AI tools were used during revision for language editing, code assistance, mathematical adversarial checking, and organization of experimental diagnostics. The authors independently re-derived the mathematical statements, reviewed all generated text, tested code changes, inspected experimental outputs, and checked cited sources. Generative AI tools were not treated as scientific sources or final arbiters of correctness. The authors take responsibility for all claims, proofs, code, and artifacts in this work.

\subsection*{Reproducibility statement}

Appendix~\ref{app:experimental_details} specifies datasets, architectures, objectives, samplers, seeds, and evaluation procedures. The supplementary material will include the Binary-MNIST harness, JiT diagnostics, and machine-readable per-run summaries.

\bibliography{references}

@inproceedings{loshchilov2017adamw,
  title={Decoupled Weight Decay Regularization},
  author={Loshchilov, Ilya and Hutter, Frank},
  booktitle={International Conference on Learning Representations},
  year={2019},
  url={https://openreview.net/forum?id=Bkg6RiCqY7}
}

@inproceedings{lou2024discrete,
  title={Discrete Diffusion Modeling by Estimating the Ratios of the Data Distribution},
  author={Lou, Aaron and Meng, Chenlin and Ermon, Stefano},
  booktitle={Proceedings of the 41st International Conference on Machine Learning},
  pages={32819--32848},
  volume={235},
  series={Proceedings of Machine Learning Research},
  publisher={PMLR},
  year={2024},
  url={https://proceedings.mlr.press/v235/lou24a.html}
}

@article{li2025back,
  title={Back to Basics: Let Denoising Generative Models Denoise},
  author={Li, Tianhong and He, Kaiming},
  journal={arXiv preprint arXiv:2511.13720},
  year={2025}
}

@article{hu2026elf,
  title={{ELF}: Embedded Language Flows},
  author={Hu, Keya and Qiu, Linlu and Lu, Yiyang and Zhao, Hanhong and Li, Tianhong and Kim, Yoon and Andreas, Jacob and He, Kaiming},
  journal={arXiv preprint arXiv:2605.10938},
  year={2026}
}

@inproceedings{chenanalog,
  title={Analog Bits: Generating Discrete Data Using Diffusion Models with Self-Conditioning},
  author={Chen, Ting and Zhang, Ruixiang and Hinton, Geoffrey},
  booktitle={The Eleventh International Conference on Learning Representations},
  year={2023},
  url={https://openreview.net/forum?id=3itjR9QxFw}
}

@inproceedings{ho2020denoising,
  title={Denoising Diffusion Probabilistic Models},
  author={Ho, Jonathan and Jain, Ajay and Abbeel, Pieter},
  booktitle={Advances in Neural Information Processing Systems},
  volume={33},
  pages={6840--6851},
  year={2020}
}

@inproceedings{lipmanflow,
  title={Flow Matching for Generative Modeling},
  author={Lipman, Yaron and Chen, Ricky TQ and Ben-Hamu, Heli and Nickel, Maximilian and Le, Matt},
  booktitle={The Eleventh International Conference on Learning Representations},
  year={2023}
}

@inproceedings{peebles2023scalable,
  title={Scalable Diffusion Models with Transformers},
  author={Peebles, William and Xie, Saining},
  booktitle={Proceedings of the IEEE/CVF International Conference on Computer Vision},
  pages={4195--4205},
  year={2023}
}

@inproceedings{hong2025soft,
  title={Soft Graph Transformer for {MIMO} Detection},
  author={Hong, Jiadong and Liu, Lei and Bian, Xinyu and Wang, Wenjie and Zhang, Zhaoyang},
  booktitle={ICASSP 2026-2026 IEEE International Conference on Acoustics, Speech and Signal Processing (ICASSP)},
  pages={21421--21425},
  year={2026},
  organization={IEEE},
  doi={10.1109/ICASSP55912.2026.11463763}
}

@inproceedings{liurflow,
  title={Flow Straight and Fast: Learning to Generate and Transfer Data with Rectified Flow},
  author={Liu, Xingchao and Gong, Chengyue and Liu, Qiang},
  booktitle={The Eleventh International Conference on Learning Representations},
  year={2023}
}

@inproceedings{esser2024scaling,
  title={Scaling Rectified Flow Transformers for High-Resolution Image Synthesis},
  author={Esser, Patrick and Kulal, Sumith and Blattmann, Andreas and Entezari, Rahim and M{\"u}ller, Jonas and Saini, Harry and Levi, Yam and Lorenz, Dominik and Sauer, Axel and Boesel, Frederic and Podell, Dustin and Dockhorn, Tim and English, Zion and Rombach, Robin},
  booktitle={Proceedings of the 41st International Conference on Machine Learning},
  pages={12606--12633},
  volume={235},
  series={Proceedings of Machine Learning Research},
  publisher={PMLR},
  year={2024},
  url={https://proceedings.mlr.press/v235/esser24a.html}
}

@article{dieleman2022cdcd,
  title={Continuous Diffusion for Categorical Data},
  author={Dieleman, Sander and Sartran, Laurent and Roshannai, Arman and Savinov, Nikolay and Ganin, Yaroslav and Richemond, Pierre H and Doucet, Arnaud and Strudel, Robin and Dyer, Chris and Durkan, Conor and Hawthorne, Curtis and Leblond, R{\'e}mi and Grathwohl, Will and Adler, Jonas},
  journal={arXiv preprint arXiv:2211.15089},
  year={2022}
}

@article{austin2021d3pm,
  title={Structured Denoising Diffusion Models in Discrete State-Spaces},
  author={Austin, Jacob and Johnson, Daniel D and Ho, Jonathan and Tarlow, Daniel and van den Berg, Rianne},
  journal={Advances in Neural Information Processing Systems},
  volume={34},
  pages={17981--17993},
  year={2021}
}

@inproceedings{campbell2024generative,
  title={Generative Flows on Discrete State-Spaces: Enabling Multimodal Flows with Applications to Protein Co-Design},
  author={Campbell, Andrew and Yim, Jason and Barzilay, Regina and Rainforth, Tom and Jaakkola, Tommi},
  booktitle={Proceedings of the 41st International Conference on Machine Learning},
  pages={5453--5512},
  volume={235},
  series={Proceedings of Machine Learning Research},
  publisher={PMLR},
  year={2024},
  url={https://proceedings.mlr.press/v235/campbell24a.html}
}

@ARTICLE{lecunmnist,
  author={LeCun, Yann and Bottou, L{\'e}on and Bengio, Yoshua and Haffner, Patrick},
  journal={Proceedings of the IEEE}, 
  title={Gradient-Based Learning Applied to Document Recognition}, 
  year={1998},
  volume={86},
  number={11},
  pages={2278-2324},
  doi={10.1109/5.726791}}

@inproceedings{heusel2017gans,
  title={{GANs} Trained by a Two Time-Scale Update Rule Converge to a Local Nash Equilibrium},
  author={Heusel, Martin and Ramsauer, Hubert and Unterthiner, Thomas and Nessler, Bernhard and Hochreiter, Sepp},
  booktitle={Advances in Neural Information Processing Systems},
  volume={30},
  year={2017}
}

@inproceedings{ronneberger2015unet,
  title={{U-Net}: Convolutional Networks for Biomedical Image Segmentation},
  author={Ronneberger, Olaf and Fischer, Philipp and Brox, Thomas},
  booktitle={International Conference on Medical Image Computing and Computer-Assisted Intervention},
  pages={234--241},
  year={2015},
  organization={Springer}
}

@article{eijkelboom2024vfm,
  title={Variational Flow Matching for Graph Generation},
  author={Eijkelboom, Floor and Bartosh, Grigory and Naesseth, Christian A and Welling, Max and van de Meent, Jan-Willem},
  journal={Advances in Neural Information Processing Systems},
  volume={37},
  pages={11735--11764},
  year={2024},
  doi={10.52202/079017-0374}
}

@inproceedings{karras2022edm,
  title={Elucidating the Design Space of Diffusion-Based Generative Models},
  author={Karras, Tero and Aittala, Miika and Aila, Timo and Laine, Samuli},
  booktitle={Advances in Neural Information Processing Systems},
  volume={35},
  pages={26565--26577},
  year={2022},
  doi={10.52202/068431-1926}
}

@inproceedings{kingma2023understanding,
  title={Understanding Diffusion Objectives as the {ELBO} with Simple Data Augmentation},
  author={Kingma, Diederik P. and Gao, Ruiqi},
  booktitle={Advances in Neural Information Processing Systems},
  volume={36},
  pages={65484--65516},
  year={2023},
  doi={10.52202/075280-2858}
}

@inproceedings{hang2023minsnr,
  title={Efficient Diffusion Training via Min-{SNR} Weighting Strategy},
  author={Hang, Tiankai and Gu, Shuyang and Li, Chen and Bao, Jianmin and Chen, Dong and Hu, Han and Geng, Xin and Guo, Baining},
  booktitle={Proceedings of the IEEE/CVF International Conference on Computer Vision},
  pages={7441--7451},
  month={October},
  year={2023}
}

@article{jiang2026sgdit,
  title={Soft Graph Diffusion Transformer for {MIMO} Detection},
  author={Jiang, Nan and Hong, Jiadong and Liu, Lei and Bian, Xinyu and Wang, Wenjie and Zhang, Zhaoyang},
  journal={arXiv preprint arXiv:2605.00449},
  year={2026},
  doi={10.48550/arXiv.2605.00449},
  url={https://arxiv.org/abs/2605.00449}
}

@inproceedings{kingma2015adam,
  title={Adam: A Method for Stochastic Optimization},
  author={Kingma, Diederik P. and Ba, Jimmy},
  booktitle={International Conference on Learning Representations},
  year={2015},
  url={https://arxiv.org/abs/1412.6980}
}

@techreport{le2015tinyimagenet,
  title={Tiny ImageNet Visual Recognition Challenge},
  author={Le, Ya and Yang, Xuan},
  year={2015},
  institution={Stanford University},
  type={{CS231N} Course Project Report},
  url={https://cs231n.stanford.edu/reports/2015/pdfs/yle_project.pdf}
}

@article{ho2022classifierfree,
  title={Classifier-Free Diffusion Guidance},
  author={Ho, Jonathan and Salimans, Tim},
  journal={arXiv preprint arXiv:2207.12598},
  year={2022},
  url={https://arxiv.org/abs/2207.12598}
}

@misc{obukhov2020torchfidelity,
  title={High-Fidelity Performance Metrics for Generative Models in {PyTorch}},
  author={Obukhov, Anton and Seitzer, Maximilian and Wu, Po-Wei and Zhydenko, Semen and Kyl, Jonathan and Lin, Elvis Yu-Jing},
  year={2020},
  publisher={Zenodo},
  doi={10.5281/zenodo.3786539},
  url={https://github.com/toshas/torch-fidelity}
}
\bibliographystyle{plainnat}

\appendix
\section{Proofs and Additional Analysis}
\label{app:proofs}

\subsection{Fixed-time population target equivalence}
\label{app:flow_target_proof}

\begin{proposition}[Signal regression recovers the fixed-time population flow target]
\label{prop:flow_target}
Fix $t\in(0,1)$ and let $m_t(z,c)$ be any version of $\Ex[X\mid Z_t=z,C=c]$, with $c$ omitted in the unconditional setting. The unrestricted population minimizer of signal-space MSE is $\hat X^*(z,c,t)=m_t(z,c)$, uniquely up to null sets of $(Z_t,C)$ (or $Z_t$ unconditionally). Moreover, a version of the marginal flow-matching velocity is
\begin{equation}
v^*(z,c,t):=\Ex[X-E\mid Z_t=z,C=c]
=\frac{m_t(z,c)-z}{1-t},
\label{eq:population_flow_target}
\end{equation}
for $P_{(Z_t,C)}$-almost every $(z,c)$. Since $X-E\in L^2$, this conditional expectation is also the unrestricted velocity-MSE minimizer.
\end{proposition}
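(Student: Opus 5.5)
The proof has three parts. First, the minimizer claim is the standard $L^2$ projection property of conditional expectation. Second, the velocity formula follows from the linear path, which lets us solve for $E$ in terms of $X$ and $Z_t$. Third, the velocity-MSE claim is again the $L^2$ projection property, now applied to $X-E$.

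\textbf{Signal-space minimizer.} Fix $t\in(0,1)$. Since $\Ex\norm{X}^2<\infty$, we have $X\in L^2$, so $m_t(Z_t,C)$ is the orthogonal projection of $X$ onto $L^2(\sigma(Z_t,C))$. For any measurable $f$ with $f(Z_t,C)\in L^2$, the tower property gives
\begin{equation}
\Ex\norm{f(Z_t,C)-X}^2=\Ex\norm{f(Z_t,C)-m_t(Z_t,C)}^2+\Ex\norm{m_t(Z_t,C)-X}^2,
\end{equation}
because the cross term $\Ex\langle f-m_t,\,m_t-X\rangle$ vanishes after conditioning on $(Z_t,C)$. Hence the risk is minimized exactly when $f=m_t$ holds $P_{(Z_t,C)}$-a.s., which is uniqueness up to null sets. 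Predictors $f$ with $f(Z_t,C)\notin L^2$ have infinite risk, because $X\in L^2$ and $\norm{f}^2\le 2\norm{f-X}^2+2\norm{X}^2$; so they are excluded without loss.

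\textbf{Velocity identity.} From $Z_t=tX+(1-t)E$ we solve $E=(Z_t-tX)/(1-t)$, which gives the pathwise identity
\begin{equation}
X-E=\frac{(1-t)X-Z_t+tX}{1-t}=\frac{X-Z_t}{1-t}.
\end{equation}
$Z_t$ is $\sigma(Z_t,C)$-measurable and $1-t>0$ is a constant. By linearity of conditional expectation and the "taking out what is known" property,
\begin{equation}
\Ex[X-E\mid Z_t,C]=\frac{m_t(Z_t,C)-Z_t}{1-t}\quad\text{a.s.}
\end{equation}
Transferring this to versions as functions of $(z,c)$ via the Doob--Dynkin lemma gives equation~\eqref{eq:population_flow_target} for $P_{(Z_t,C)}$-a.e.\ $(z,c)$. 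The unconditional case is the same argument with $C$ removed.

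\textbf{Velocity-MSE minimizer.} $X-E\in L^2$ because $\Ex\norm{E}^2=D$ and $\Ex\norm{X}^2<\infty$. The same projection decomposition as above, now with target $X-E$, shows that $v^*$ is the unique (a.s.) minimizer of velocity MSE.

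The step needing the most care is the measure-theoretic bookkeeping about versions. Everything is stated a.s.\ on the law of $(Z_t,C)$, and the pathwise identity is exact, so this is routine rather than a real obstacle.
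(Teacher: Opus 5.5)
Your proposal is correct and follows the paper's proof essentially step for step: the conditional-expectation orthogonality decomposition for signal MSE, the pathwise identity $X-E=(X-Z_t)/(1-t)$ followed by taking conditional expectations, and the same decomposition applied to the target $X-E\in L^2$. Your added remarks, that infinite-risk predictors are excluded and that the Doob--Dynkin lemma transfers the identity to versions, just make explicit what the paper leaves implicit.
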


\begin{proof}
For fixed $t$, the conditional-expectation orthogonality identity gives, for every measurable predictor $f$ with finite risk,
\begin{equation}
\Ex\norm{f(Z_t,C,t)-X}^2
=\Ex\norm{f(Z_t,C,t)-m_t(Z_t,C)}^2
+\Ex\norm{m_t(Z_t,C)-X}^2,
\end{equation}
where $m_t(z,c)$ is any version of $\Ex[X\mid Z_t=z,C=c]$. Hence signal-space MSE is minimized by $f=m_t$, uniquely up to $P_{(Z_t,C)}$-null sets. From $Z_t=tX+(1-t)E$ we also have, conditional on $(Z_t,C)=(z,c)$,
\begin{equation}
X-E=X-\frac{z-tX}{1-t}=\frac{X-z}{1-t}.
\end{equation}
Taking conditional expectations yields
\begin{equation}
\Ex[X-E\mid Z_t=z,C=c]=\frac{m_t(z,c)-z}{1-t},
\end{equation}
for $P_{(Z_t,C)}$-almost every $(z,c)$, which is exactly the implied velocity of the population-optimal signal predictor. Since $X-E\in L^2$, applying the same orthogonality identity with target $X-E$ shows that this conditional expectation is the unrestricted velocity-MSE minimizer. In the unconditional setting, all occurrences of $C$ and $c$ are omitted.
\end{proof}

\subsection{Sufficient conditions from the output bias}
\label{app:init_head_conditions}

The gradient lower bounds in Assumption~\ref{ass:init_regimes} can be checked through a subset of trainable parameters. Suppose the prediction contains an additive bias $Sb$, where $S\in\mathbb R^{D\times d_b}$ is a fixed nonzero matrix and the remaining prediction does not depend on $b$. For the summed signal loss,
\begin{equation}
\nabla_b\ell_x(t)=2S^\top r_0(t),\qquad
G_{\theta_0}(t)\ge4\Ex\norm{S^\top r_0(t)}^2.
\end{equation}
Sharing a bias across pixels or patches changes $S$; it does not invalidate this parameter-block lower bound.

\paragraph{Zero-initialized direct output.}
If $\hat X_{\theta_0}(z,t)=0$ for all inputs, then $r_0(t)=-X$ and
\begin{equation}
G_{\theta_0}(t)\ge4\Ex\norm{S^\top X}^2.
\label{eq:zero_head_bias}
\end{equation}
Thus Regime A holds whenever $\Pr(S^\top X\ne0)>0$. For independent per-coordinate biases, $S=I_D$ and the condition is simply $\Ex\norm{X}^2>0$. The JiT implementation used in our diagnostics instead sets both the weights and bias of \texttt{final\_layer.linear} to zero. This layer predicts a patch vector, and its trainable bias is shared across patches. Writing an image as $P$ target patches $X_1,\ldots,X_P$,
\begin{equation}
S^\top X=\sum_{j=1}^P X_j,\qquad
G_{\theta_0}(t)\ge4\Ex\norm{\sum_{j=1}^P X_j}^2.
\label{eq:jit_patch_bias}
\end{equation}
The sufficient nondegeneracy condition is therefore a positive second moment of the summed target patches. Zero output does not imply zero parameter gradient: the output bias remains trainable at initialization.

\paragraph{Zero-initialized identity residual.}
If $\hat X_{\theta_0}(Z_t,t)=Z_t$, then $r_0(t)=(1-t)(E-X)$. Independence and the zero mean of $E$ imply
\begin{equation}
\Ex\norm{S^\top(E-X)}^2
=\norm{S}_{\mathrm F}^2+\Ex\norm{S^\top X}^2>0.
\end{equation}
Consequently,
\begin{equation}
G_{\theta_0}(t)\ge
4(1-t)^2\big(\norm{S}_{\mathrm F}^2+\Ex\norm{S^\top X}^2\big),
\end{equation}
which supplies Regime B. This argument concerns a trainable residual output bias; the identity skip alone does not supply a parameter gradient.

\paragraph{Nonzero-initialized direct output.}
More generally, suppose $r_0(t)\to r_0(1)$ in $L^2$ as $t\uparrow1$ and let $d_0=\Ex\norm{S^\top r_0(1)}^2>0$. Continuity of the $L^2$ norm gives $\Ex\norm{S^\top r_0(t)}^2\ge d_0/2$ on a sufficiently small endpoint neighborhood, yielding Regime A with $c_J=d_0/2$.

The Binary-MNIST U-Net uses a direct convolutional output \texttt{out\_conv} with one trainable scalar bias shared across its $D$ pixels. It is not zero-initialized. Conditional on all other initial parameters, write its endpoint residual as $r_0(1)=h(X,C)-X+b\mathbf 1$, with dropout randomness suppressed in the notation for $h$. The squared projected residual is
\begin{equation}
\Ex\big[\big(\mathbf 1^\top(h(X,C)-X)+Db\big)^2\big].
\end{equation}
As a function of $b$, this is a nonnegative quadratic with positive leading coefficient $D^2$, so it vanishes at at most one value of $b$. A continuous initialization of $b$ independent of the remaining parameters therefore gives $d_0>0$ almost surely. The implemented GroupNorm, convolutions, and SiLU activations are continuous for fixed dropout masks; the final GroupNorm followed by SiLU and a finite convolution bounds the initialized output for fixed parameters. With finite data second moments, dominated convergence gives the required $L^2$ endpoint continuity, also after averaging over dropout masks. This establishes Regime A under the stated initialization model without global Jacobian nondegeneracy.

\subsection{Proof of Theorem~\ref{thm:init_divergence}}
\label{app:proof_divergence}

\begin{proof}
By Equations~\eqref{eq:init_orders} and~\eqref{eq:optimizer_second_moments}, for every $b\in(t_0,1)$,
\begin{equation}
M_v(q)
\ge
\begin{cases}
4c_J\int_{t_0}^{b}q(t)(1-t)^{-4}dt, & \text{Regime A},\\
4c_J\int_{t_0}^{b}q(t)(1-t)^{-2}dt, & \text{Regime B}.
\end{cases}
\end{equation}
Now assume Equation~\eqref{eq:sampler_endpoint_lower_bound}. For every $b\in(t_1,1)$, the nonnegativity of the integrand gives
\begin{equation}
M_v(q)
\ge4c_Jc_q\int_{t_1}^{b}(1-t)^{\beta-p}dt.
\end{equation}
If $\beta\le p-1$, the truncated integral tends to infinity as $b\uparrow1$. Monotone convergence therefore yields $M_v(q)=\infty$. Uniform sampling has $c_q=1$ and $\beta=0$, so the condition holds for both $p=4$ and $p=2$.
\end{proof}

At a trained state, Equation~\eqref{eq:optimizer_second_moments}, rather than the initialization orders in Equation~\eqref{eq:init_orders}, remains the exact integrability criterion.

\subsection{Proof of Proposition~\ref{prop:aligned_regularity}}
\label{app:alignment_proof}

\begin{proof}
Since $q$ is a probability density, Equation~\eqref{eq:optimizer_second_moments} gives
\begin{equation}
M_x(q)=\int_0^1q(t)G_\theta(t)dt
\le C_G(\theta)\int_0^1q(t)dt
=C_G(\theta).
\end{equation}
\end{proof}

A transparent sufficient condition for $C_G(\theta)<\infty$ is the following bound, with the essential supremum taken under Lebesgue time measure and the underlying data--noise law:
\begin{equation}
K_\theta:=\operatorname*{ess\,sup}_{t\in(0,1),\,\omega\in\Omega}\norm{J_\theta(t;\omega)}_{\mathrm{op}}<\infty,
\qquad
C_R(\theta):=\sup_{t\in(0,1)}\Ex\norm{r_\theta(t)}^2<\infty,
\label{eq:jacobian_residual_sufficient}
\end{equation}
Indeed, for Lebesgue-almost every $t\in(0,1)$,
\begin{equation}
G_\theta(t)
\le4\Ex\left[\norm{J_\theta(t)}_{\mathrm{op}}^2\norm{r_\theta(t)}^2\right]
\le4K_\theta^2C_R(\theta).
\end{equation}
The residual-moment condition does not follow from the Jacobian bound and is stated separately. Neither condition requires a pointwise bounded output head. A weaker joint-moment alternative is
\begin{equation}
C_{Jr}(\theta):=\sup_{t\in(0,1)}\Ex\left[\norm{J_\theta(t)}_{\mathrm{op}}^2\norm{r_\theta(t)}^2\right]<\infty,
\end{equation}
which directly yields $C_G(\theta)\le4C_{Jr}(\theta)$. Separate uniform fourth-moment bounds on $\norm{J_\theta(t)}_{\mathrm{op}}$ and $\norm{r_\theta(t)}$ imply this condition by Cauchy--Schwarz.

\paragraph{Other aligned heads.}
For direct velocity prediction with residual $r_v=\hat V_\theta-(X-E)$ and Jacobian $J_\theta^v=\nabla_\theta\hat V_\theta$, the identical argument gives
\begin{equation}
\operatorname*{ess\,sup}_{t\in(0,1)}\Ex\norm{2(J_\theta^v)^\top r_v}^2
\le4(K_\theta^v)^2C_R^v(\theta)
\end{equation}
under the corresponding fixed-state Jacobian and residual-moment conditions. For summed BCE-with-logits, let $A_\theta$ be the logits and $Y\in\{0,1\}^D$. Its gradient is
\begin{equation}
g_{\mathrm{BCE}}=(J_\theta^A)^\top(\sigma(A_\theta)-Y).
\end{equation}
Because $\norm{\sigma(A_\theta)-Y}^2\le D$, a Jacobian bound $\norm{J_\theta^A}_{\mathrm{op}}\le K_\theta^A$ holding almost surely for Lebesgue-almost every $t$, with the same constant $K_\theta^A$, gives $\operatorname*{ess\,sup}_{t\in(0,1)}\Ex\norm{g_{\mathrm{BCE}}}^2\le(K_\theta^A)^2D$. Fixed class weights only multiply this bound by the squared maximum weight. These statements, like Proposition~\ref{prop:aligned_regularity}, are conditional fixed-state bounds rather than training-trajectory guarantees.

\subsection{Proof of Proposition~\ref{prop:lognormal_integrability}}
\label{app:lognormal}

\begin{proof}
Let $U\sim\mathcal N(m,s^2)$ with $s>0$ and $t=\sigma(U)$. For any nonnegative measurable $F$,
\begin{equation}
\int_0^1q_{\mathrm{LN}}(t)F(t)dt
=\Ex_{U\sim\mathcal N(m,s^2)}[F(\sigma(U))].
\end{equation}
Suppose the base signal-gradient second moment has finite polynomial endpoint growth: for some $C<\infty$ and $p\ge0$,
\begin{equation}
G_\theta(t)\le C(1-t)^{-p}
\end{equation}
on a terminal interval. Under velocity-space mismatch, the integrand is bounded by $C(1-t)^{-(p+4)}$. Since
\begin{equation}
(1-\sigma(U))^{-1}=1+e^U,
\end{equation}
the transformed integrand grows at most as a constant multiple of $e^{(p+4)U}$ for sufficiently large $U$, whose expectation is finite under a Gaussian distribution. Thus the terminal contribution $\int_{t_0}^1q_{\mathrm{LN}}(t)a(t)^2G_\theta(t)dt$ is finite for every finite polynomial endpoint order. This argument does not control $(0,t_0]$: the full second moment is finite only when that nonterminal contribution is also integrable. This conclusion requires an upper-growth condition; an initialization-time lower bound such as Assumption~\ref{ass:init_regimes} is not sufficient for it.

If $C_G(\theta)=\operatorname*{ess\,sup}_{t\in(0,1)}G_\theta(t)<\infty$, then the full integral satisfies
\begin{equation}
\int_0^1q_{\mathrm{LN}}(t)a(t)^2G_\theta(t)dt
\le C_G(\theta)\Ex\left[(1-\sigma(U))^{-4}\right]<\infty.
\end{equation}
The last expectation is finite because $(1-\sigma(U))^{-1}=1+e^U$ and a Gaussian has finite exponential moments of every finite order.
\end{proof}

\section{Experimental Details}
\label{app:experimental_details}

\subsection{Paired initialization-regime diagnostic}
\label{app:toy_initialization}

Table~\ref{tab:initialization_settings} summarizes the protocol for the direct measurements in Table~\ref{tab:toy_initialization}.

\begin{table}[H]
\centering
\small
\caption{Paired MLP initialization diagnostic: backbone and measurement settings.}
\label{tab:initialization_settings}
\begin{tabular}{p{0.28\linewidth} p{0.64\linewidth}}
\toprule
\textbf{Item} & \textbf{Setting} \\
\midrule
\multicolumn{2}{l}{\textit{Backbone and initialization}} \\
Branch & Gated MLP; two hidden layers, width 256; SiLU activations and sigmoid gates \\
Timestep embedding & 128-dimensional sinusoidal embedding \\
Hidden and gate layers & Xavier-uniform weights; zero biases \\
Final linear layer & Zero weights and bias; all parameters remained trainable \\
Predictors & A: $f_\theta(Z_t,t)$; B: $Z_t+f_\theta(Z_t,t)$ \\
\midrule
\multicolumn{2}{l}{\textit{Data and gradient measurement}} \\
Data / noise & 128 independent pairs $X,E\sim\mathcal N(0,I_{16})$, reused at all timesteps and seeds \\
Initialization seeds & 42, 43, 44; paired parameter tensors across A and B \\
Endpoint grid & 26 distinct distances $\epsilon=1-t$ from $10^{-1}$ to $10^{-6}$ \\
Loss / statistic & Mean-coordinate MSE; average of individual-example squared full-parameter gradient norms \\
Precision / updates & FP64; separate backward passes for each loss; no optimizer or parameter updates \\
Safeguards & No denominator floor, gradient clipping, or mixed precision \\
\bottomrule
\end{tabular}
\end{table}

For each seed, the two predictors used exactly the same parameter tensors and differed only by the addition of the parameter-free input $Z_t$ to the branch output. Their parameter Jacobians are consequently identical. The direct output was initially zero; the identity-residual output was initially $Z_t$. With the final layer zero-initialized, only its weight and bias gradients are active at this state. This is a controlled test of the two constructed initialization regimes, separate from the trained Gaussian sampler experiment below.

We evaluated 26 distinct endpoint distances from $10^{-1}$ to $10^{-6}$ in FP64. At every distance and for every example, we performed separate forward and backward passes for the two losses:
\begin{align}
\ell_{x,j}^{\mathrm{mean}}(t)&=\frac{1}{16}\norm{\hat X_{\theta_0}(Z_{t,j},t)-X_j}^2,\\
\ell_{v,j}^{\mathrm{mean}}(t)&=\frac{1}{16}\norm{\frac{\hat X_{\theta_0}(Z_{t,j},t)-Z_{t,j}}{1-t}-(X_j-E_j)}^2.
\end{align}
The recorded second moments are
\begin{equation}
\widehat G_s(t)=\frac{1}{128}\sum_{j=1}^{128}
\norm{\nabla_\theta\ell_{s,j}^{\mathrm{mean}}(t)}^2,\qquad s\in\{x,v\}.
\end{equation}
These are averages of individual-example squared gradient norms, not squared norms of batch-averaged gradients. Mean-coordinate MSE scales each moment by $16^{-2}$ relative to the summed-loss convention in the theory. The velocity-loss observations were obtained from their own backward passes, not by multiplying the aligned measurements by $(1-t)^{-4}$. No denominator floor, gradient clipping, or mixed precision was used. Parameter hashes remained unchanged throughout each diagnostic and matched between the paired predictors; data, noise, and timesteps also matched exactly.

As an implementation check, we compared the independently computed full gradient vectors with $g_v=(1-t)^{-2}g_x$ on the first example at every timestep and initialization, and checked the squared-norm ratio on all examples. Table~\ref{tab:toy_initialization} selects three distances from this grid and reports the mean and sample standard deviation of the three initialization-specific moments. This standard deviation describes initialization variability on the fixed data, not uncertainty over independent data draws.

Figure~\ref{fig:toy_initialization} displays all 128 individual squared-gradient observations separately for each seed at $\epsilon=1-t\approx10^{-1},10^{-3},10^{-6}$, without smoothing or sample-level jitter. For B, each recorded value was divided by the actual $\epsilon^2$; the normalization is prescribed by the initialization construction, not fitted to the observations. Boxes and whiskers summarize variation across the fixed examples within one initialization, not uncertainty across initialization seeds. The three seed groups and different timesteps reused the same data and must not be pooled as independent data draws. These finite-grid measurements check empirical nondegeneracy; they do not by themselves establish a uniform population lower bound or asymptotic divergence.

\begin{figure}[!htbp]
\centering
\includegraphics[width=0.90\linewidth,trim=5.2bp 7.1bp 5.2bp 9bp,clip]{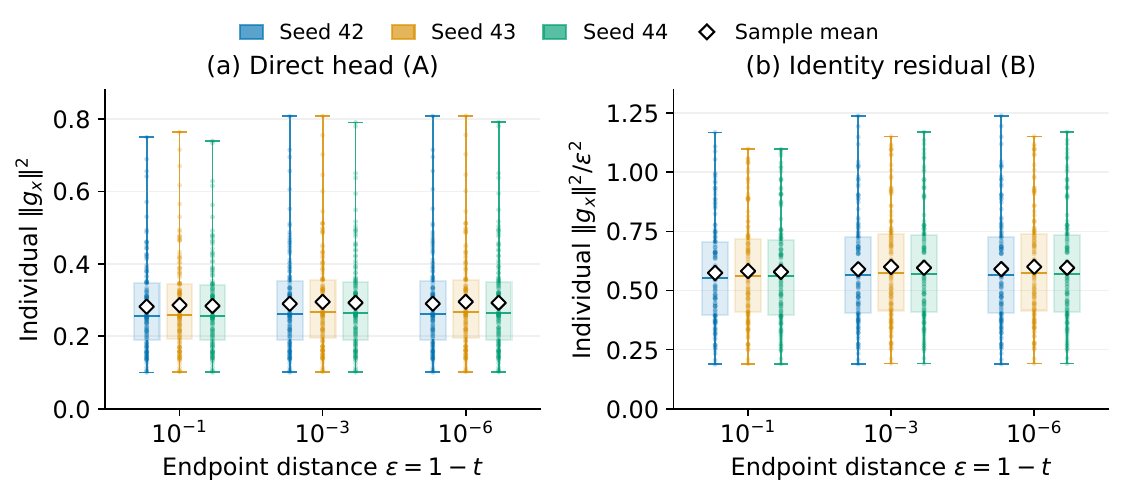}
\caption{Individual-example distributions for the initialization control. Each stack contains 128 measurements for one seed and endpoint distance. Left: squared aligned-gradient norms for A. Right: squared aligned-gradient norms divided by $\epsilon^2$ for B. Boxes show the interquartile range, whiskers the minimum and maximum, and diamonds the sample mean. Unlike Table~\ref{tab:toy_initialization}, the right panel is endpoint-normalized to display variation within each group.}
\label{fig:toy_initialization}
\end{figure}

\subsection{Controlled Gaussian protocol}
\label{app:toy_gaussian}

In Figure~\ref{fig:toy_gaussian}, we compared Uniform and Logit-Normal sampling over three paired initializations (seeds 42, 43, 44). Each pair used identical initial parameter tensors and shared data--noise draws at every update; only the timestep sampler changed. Table~\ref{tab:gaussian_settings} records the settings. This experiment measured the loss and gradient trajectories obtained when optimizing a fixed pointwise loss under different timestep samplers.

\begin{table}[!ht]
\centering
\small
\caption{Gaussian sampler-control training configuration.}
\label{tab:gaussian_settings}
\begin{tabular}{p{0.28\linewidth} p{0.64\linewidth}}
\toprule
\textbf{Item} & \textbf{Setting} \\
\midrule
\multicolumn{2}{l}{\textit{Backbone and data}} \\
Backbone & Gated MLP; two hidden layers of width 256, SiLU activations; 128-dimensional sinusoidal timestep embedding \\
Data / interpolation & $X,E\sim\mathcal N(0,I_{16})$ independently; $Z_t=tX+(1-t)E$ \\
Prediction / objective & Direct $x$-prediction, zero-initialized final layer, no identity skip; raw velocity MSE averaged over examples and coordinates \\
\midrule
\multicolumn{2}{l}{\textit{Training and logging}} \\
Optimizer / learning rate & Adam, $10^{-4}$; betas $(0.9,0.999)$, $\varepsilon_{\mathrm{Adam}}=10^{-8}$ \\
Budget / batch size & 5,000 updates / 1,000 \\
Timestep sampler & Uniform $(0,1)$ or $T=\sigma(U)$, $U\sim\mathcal N(-0.8,0.8^2)$ \\
Numerics / trace & FP64; no denominator floor or gradient clipping; pre-update full-parameter minibatch-gradient norm \\
\bottomrule
\end{tabular}
\end{table}

For each pair, a common uniform quantile $u$ gave $T_{\mathrm U}=u$ and $T_{\mathrm{LN}}=\sigma(-0.8+0.8\Phi^{-1}(u))$, where $\Phi$ is the standard normal CDF. We computed the loss directly as the mean squared value of $(\hat X_\theta-Z_T)/(1-T)-(X-E)$, without a denominator safeguard. The zero-initialized direct head reused the parameter snapshots from the initialization diagnostic; the sampler comparison changed neither the head nor the pointwise loss within each pair.

All six runs completed 5,000 updates with finite losses and gradients. The three Uniform runs had median gradient norms of order $10^3$ and maxima ranging from $10^9$ to $10^{12}$, whereas Logit-Normal medians were of order $10^{-1}$ and maxima of order $10^0$. Figure~\ref{fig:toy_gaussian}(a) shows the exact bin-averaged sampler densities, computed from their CDFs rather than empirical timestep counts. Panels (b,c) retain all per-update traces as faint lines and overlay the arithmetic mean across seeds followed by a trailing 25-update moving average. The loss panel uses a linear left axis for Logit-Normal and a logarithmic right axis for Uniform, as indicated by the sampler-colored labels. The native losses reflect different timestep-averaged objectives, and are not a matched-distribution prediction-quality comparison. The logged minibatch-gradient norm is distinct from the individual-example second moment estimated in Table~\ref{tab:toy_initialization}.

The experiment archive contains the model source, initial states, configuration and source hashes, input-stream checks, final parameters, and every update's native loss, signal MSE, gradient norm, and minimum endpoint distance. These records make the pairing and absence of denominator clipping auditable.

\FloatBarrier
\subsection{Binary-MNIST protocol}
\label{app:bmnist_protocol}

Table~\ref{tab:bmnist_settings} groups the backbone, training, and evaluation settings used for Figure~\ref{fig:bmnist_corrected}. All configurations shared the dataset, split, generation code, and FID evaluator.

\begin{table}[!ht]
\centering
\small
\caption{Binary-MNIST objective--sampler matrix: shared experimental configuration.}
\label{tab:bmnist_settings}
\begin{tabular}{p{0.28\linewidth} p{0.64\linewidth}}
\toprule
\textbf{Item} & \textbf{Setting} \\
\midrule
\multicolumn{2}{l}{\textit{Backbone and data}} \\
Backbone & Class-conditional U-Net \citep{ronneberger2015unet} \\
Conditioning & $Z_t$, timestep through a two-layer SiLU MLP, and learned digit-label embedding \\
Data & Deterministically thresholded MNIST \citep{lecunmnist}, mapped to $\{-1,+1\}$; fixed train/validation split \\
\midrule
\multicolumn{2}{l}{\textit{Training: flow matching}} \\
Interpolation & $Z_t=tX+(1-t)E$, $E\sim\mathcal N(0,I)$ \\
Optimizer / learning rate & Adam \citep{kingma2015adam}, $10^{-4}$ \\
Epochs / batch size & 100 / 128 \\
Training seeds & 42, 43, 44 \\
Timestep sampling & Uniform $(0,1)$ or $\operatorname{logit}(T)\sim\mathcal N(-0.8,0.8^2)$ \\
Prediction--loss pairings & $x$-to-$v$ MSE, $x$-MSE, $v$-MSE, and BCE-with-logits \\
\midrule
\multicolumn{2}{l}{\textit{Sampling and evaluation}} \\
ODE sampler & Euler, 50 steps; no hard thresholding during integration \\
BCE signal mapping & $\hat X_\theta=2\sigma(A_\theta)-1$ before velocity computation \\
FID samples & 50,000 conditional samples, balanced across digits \\
FID features & 128-dimensional penultimate features of a converged MNIST classifier \\
Classifier training & Adam, learning rate $10^{-3}$, batch size 128, 25 epochs; seed 314159 \\
Classifier selection & Highest validation accuracy; earliest epoch in a tie (epoch 20) \\
Generator selection & Final epoch-100 checkpoint for every configuration \\
Shared evaluation & Same classifier, real-feature statistics, generation seed, and integration code \\
Reported variability & Mean and sample standard deviation across training seeds \\
\bottomrule
\end{tabular}
\end{table}

The four configurations in Figure~\ref{fig:bmnist_corrected} used:
\begin{align}
v\text{-MSE: }&\quad \norm{\hat V_\theta-(X-E)}^2,\\
x\text{-to-}v\text{ MSE: }&\quad (1-t)^{-2}\norm{\hat X_\theta-X}^2,\\
x\text{-MSE: }&\quad \norm{\hat X_\theta-X}^2,\\
\mathrm{BCE: }&\quad \mathrm{BCEWithLogits}(A_\theta,(X+1)/2).
\end{align}
For BCE inference, logits were mapped back to the analog signal as $\hat X_\theta=2\sigma(A_\theta)-1$ before computing the implied velocity. All $x$-prediction models used the same Euler-50 ODE rule without hard thresholding during integration.

\paragraph{Classifier training and checkpoint selection.}
MNIST's 60,000 training images were thresholded at 0.5, mapped to $\{-1,+1\}$, and split into 54,000 training and 6,000 validation images with split seed 42. The \texttt{SimpleClassifier} contained two $3\times3$ convolutional layers (32 and 64 channels), each followed by ReLU and $2\times2$ max pooling, a 128-unit fully connected ReLU layer, and a ten-class output layer; dropout with probability 0.5 preceded the output layer during training. It was trained with cross-entropy and Adam for 25 epochs using learning rate $10^{-3}$, batch size 128, and seed 314159. We selected the checkpoint with the highest held-out validation accuracy, retaining the earlier checkpoint in a tie, rather than selecting by generated-sample FID. The recorded maximum occurred at epoch 20 (99.03\% validation accuracy). This checkpoint was frozen in evaluation mode, disabling dropout, for all feature extraction.

\paragraph{FID computation and generation protocol.}
FID-50K \citep{heusel2017gans} used the 128-dimensional activations after the \texttt{fc1} ReLU, not classification logits or probabilities. Real-reference features were extracted from 50,000 images in the training loader and cached once. For each generator, we produced 50,000 conditional samples (5,000 per digit) using generation seed 1000 and Euler-50 integration from Gaussian noise. The generator and classifier were both in evaluation mode. The implemented $x$-to-velocity denominator was $1-t+10^{-8}$ at $t=k/50$, $k=0,\ldots,49$; this numerical offset applied only to generation, not the raw training loss. Generated signals were passed directly to the classifier without thresholding or clipping. Only the displayed sample grids were mapped to $[0,1]$ and clipped for visualization. FID was computed from feature means and sample covariances in double precision. Every configuration shared the frozen classifier, cached real features, generation seed, and integration code.

\paragraph{Generator checkpoints and reported variability.}
Classifier selection was separate from generator selection: all entries in Figure~\ref{fig:bmnist_corrected} used the final epoch-100 generator checkpoint, not the checkpoint with the best validation loss or FID. Means and sample standard deviations summarize three independently trained generators (seeds 42--44), each evaluated with the same generation seed.

\paragraph{Evaluator diagnostics and per-seed results.}
Figure~\ref{fig:bmnist_classifier_diagnostic} reports the recorded classifier losses and validation accuracy throughout training. Validation accuracy increased from 97.60\% at epoch 1 to its maximum of 99.03\% at epoch 20, while the lowest validation cross-entropy occurred at epoch 7. The checkpoint was selected by accuracy, not loss. These curves document evaluator training and checkpoint selection. FID-ranking sensitivity to the choice of classifier checkpoint was not evaluated in this protocol.

\begin{figure}[!htbp]
\centering
\includegraphics[width=0.96\linewidth]{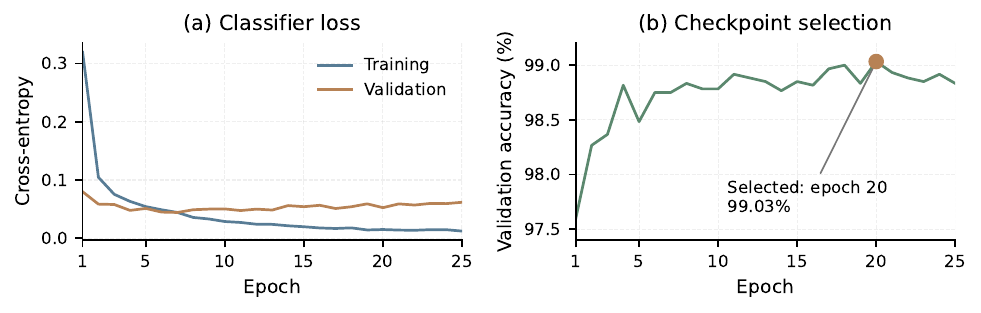}
\caption{BMNIST FID evaluator diagnostics from the classifier used for the reported experiments. Left: training and validation cross-entropy. Right: validation accuracy, with the selected epoch-20 checkpoint marked. Curves show the recorded epoch values without smoothing.}
\label{fig:bmnist_classifier_diagnostic}
\end{figure}

Table~\ref{tab:bmnist_per_seed} gives each generator's FID under the same frozen evaluator. Uniform-sampled mismatch had FID above $10^3$ in all three runs, whereas Logit-Normal-sampled mismatch and the aligned configurations remained at the order of $10^1$. The detailed values expose training-seed variability rather than uncertainty from repeated sampling of one generator.

\begin{table}[!htbp]
\centering
\small
\caption{BMNIST FID-50K for each training seed. All entries used the final epoch-100 generator, the selected epoch-20 classifier, and generation seed 1000. The last column reports the mean and sample standard deviation.}
\label{tab:bmnist_per_seed}
\begin{tabular}{llrrrr}
\toprule
Prediction--loss & Sampler & Seed 42 & Seed 43 & Seed 44 & Mean $\pm$ s.d. \\
\midrule
$x$-pred + $v$-MSE & Uniform & 1170.35 & 1524.46 & 1113.80 & $1269.53\pm222.57$ \\
$x$-pred + $v$-MSE & Logit-Normal & 10.54 & 13.02 & 9.52 & $11.02\pm1.80$ \\
$x$-pred + $x$-MSE & Uniform & 15.39 & 9.67 & 13.60 & $12.88\pm2.93$ \\
$x$-pred + $x$-MSE & Logit-Normal & 8.45 & 13.38 & 17.02 & $12.95\pm4.30$ \\
$v$-pred + $v$-MSE & Uniform & 24.24 & 16.32 & 18.46 & $19.67\pm4.10$ \\
$v$-pred + $v$-MSE & Logit-Normal & 16.81 & 15.58 & 10.34 & $14.24\pm3.44$ \\
$x$-pred + BCE & Uniform & 12.21 & 8.67 & 13.16 & $11.35\pm2.36$ \\
$x$-pred + BCE & Logit-Normal & 7.61 & 10.47 & 8.93 & $9.01\pm1.43$ \\
\bottomrule
\end{tabular}
\end{table}

\FloatBarrier
\subsection{Binary-MNIST implementation checks}
\label{app:bmnist_audit}

All Binary-MNIST configurations used the same threshold-binarized dataset, train/validation split, generation code, and FID evaluator. For BCE, logits were mapped to the analog signal as $2\sigma(A_\theta)-1$ before computing the ODE velocity. Validation mode was confined to evaluation, best and final checkpoints were stored separately, and every reported seed corresponds to an independently trained model. These checks ensure that the comparison isolates the stated prediction--loss pairing and timestep sampler.

\FloatBarrier
\subsection{Normalized endpoint-cap sweep}
\label{app:cap_sweep}

The sweep used the conditional U-Net backbone and Binary-MNIST evaluation protocol in Appendix~\ref{app:bmnist_protocol}. Each run used 100 epochs, batch size 128, learning rate $10^{-4}$, and the fixed split with split seed 42. We crossed $C\in\{10^2,10^4,10^6,10^8\}$ with training seeds 42, 43, and 44 under Uniform sampling on the full timestep interval. The loss is per-example signal MSE multiplied by
\begin{equation}
\widetilde w_C(t)=\frac{\min\{(1-t)^{-2},C\}}{2\sqrt C-1}.
\label{eq:cap_weight}
\end{equation}
Evaluation used the final epoch-100 checkpoint, 50,000 generated samples, Euler-50 integration, and generation seed 1000. Table~\ref{tab:bmnist_cap} reports variability across the three training seeds.

For $C\ge1$, define $W_C(t)=\min\{(1-t)^{-2},C\}$. Under Uniform $t$,
\begin{align}
\Ex W_C&=2\sqrt C-1,\\
\Ex W_C^2&=\frac{4C^{3/2}-1}{3}.
\end{align}
The normalized weight $\widetilde w_C=W_C/(2\sqrt C-1)$ has unit mean and
\begin{equation}
\Ex\widetilde w_C^2
=\frac{4C^{3/2}-1}{3(2\sqrt C-1)^2}
\sim\frac{\sqrt C}{3}.
\end{equation}
The corresponding FID-50K results are summarized in Table~\ref{tab:bmnist_cap} of Section~\ref{sec:empirical_amplification}.

The normalization fixes the mean analytical scale but changes the time profile of supervision. The sweep therefore evaluates the joint training effect of changing endpoint concentration and the time profile of supervision.

\FloatBarrier
\subsection{JiT-B/4 continuous-image protocol}
\label{app:jit_details}

For context, JiT explicitly adopts Logit-Normal timestep sampling following the Stable Diffusion 3 rectified-flow work \citep{esser2024scaling,li2025back}, and ELF uses the same sampler family \citep{hu2026elf}. JiT documents a $0.05$ floor on the denominator in the $x$-to-$v$ conversion \citep{li2025back}. ELF's paper does not state this clipping value, but its official implementation sets \texttt{t\_eps=0.05} and replaces $1-t$ by $\max(1-t,\texttt{t\_eps})$ in the conversion.\footnote{See lines 110--120 of the \href{https://github.com/lillian039/ELF/blob/main/src/utils/sampling_utils.py\#L110-L120}{official ELF implementation}.} We record these implementation details to distinguish the published safeguarded recipes from our near-unclipped and raw diagnostics.

Table~\ref{tab:jit_settings} summarizes the full objective--sampler matrix. In both mismatched training cells, the target and prediction were converted using the same $\max(1-t,10^{-6})$ denominator. This historical near-unclipped implementation caps the loss weight at $10^{12}$ and is much less protective than the official JiT floor $0.05$; the matrix is therefore a controlled JiT-backbone evaluation rather than an exact reproduction of the published recipe. The generation safeguard is recorded separately from the training floor.

\begin{table}[!ht]
\centering
\small
\caption{JiT-B/4 objective--sampler matrix: training and generation settings.}
\label{tab:jit_settings}
\begin{tabular}{p{0.28\linewidth} p{0.64\linewidth}}
\toprule
\textbf{Item} & \textbf{Setting} \\
\midrule
\multicolumn{2}{l}{\textit{Backbone and data}} \\
Model / dataset & JiT-B/4 \citep{li2025back}; Tiny-ImageNet \citep{le2015tinyimagenet}, $64\times64$ \\
Pairings / repeats & $x$-to-$v$ MSE, $x$-MSE, $v$-MSE; one run per objective--sampler cell \\
\midrule
\multicolumn{2}{l}{\textit{Training: flow matching}} \\
Epochs / effective batch & 400 / 256 \\
Optimizer & AdamW \citep{loshchilov2017adamw}, weight decay 0.03 \\
Learning-rate schedule & Base $5\times10^{-5}$; cosine decay with 5,000-step warmup \\
Timestep sampling & Uniform $(0,1)$ or $\operatorname{logit}(T)\sim\mathcal N(-0.8,0.8^2)$ \\
Mismatch conversion & Denominator $\max(1-t,10^{-6})$ for prediction and target in both sampler cells \\
\midrule
\multicolumn{2}{l}{\textit{Sampling and evaluation}} \\
ODE sampler / domain & Heun-50: 49 Heun updates followed by one Euler update; $t=0$ to $t=1$ \\
Generation safeguard & $\texttt{t\_eps}=0.05$ for all $x$-prediction cells \\
Classifier-free guidance & Scale 3.0 \citep{ho2022classifierfree}, active on $(0.1,1.0)$; this is not the ODE domain \\
FID-50K & 50,000 samples; \texttt{torch-fidelity} \citep{obukhov2020torchfidelity}, Inception-V3 pool3 features \\
\bottomrule
\end{tabular}
\end{table}

\begin{figure}[p]
\centering
\includegraphics[width=0.96\linewidth,height=0.82\textheight,keepaspectratio,trim=6.6bp 4.9bp 5.2bp 5.6bp,clip]{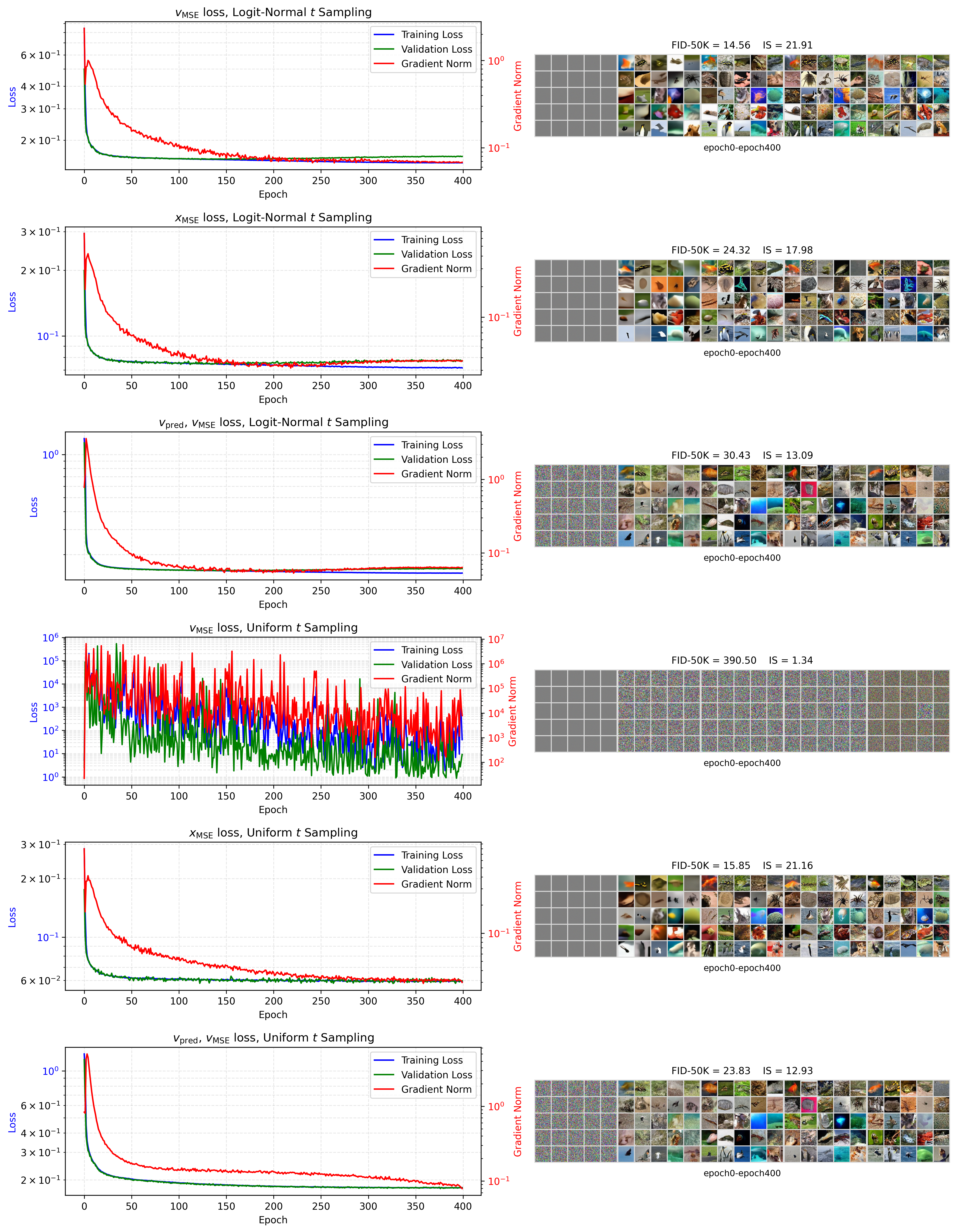}
\caption{Expanded JiT-B/4 Tiny-ImageNet diagnostics for all six objective--sampler configurations in Figure~\ref{fig:jit_dynamics}. This archived view retains the complete epoch-0-to-400 sample evolution together with the original per-run loss and gradient trajectories; the main-text figure shows only the final epoch-400 samples.}
\label{fig:jit_samples}
\end{figure}

\paragraph{Paired early-training protocol.}
\label{app:jit_early}
We compared three JiT-B/4 branches under Uniform sampling: aligned signal MSE, raw mismatched velocity MSE, and velocity MSE with a denominator floor. All three started from the same random initialization and preserved data order, augmentations, labels, class-dropout draws, timesteps, interpolation noise, and learning-rate schedule. Table~\ref{tab:jit_early_settings} records the shared settings and the sole objective/conversion difference. Each record reports the norm of the accumulated minibatch gradient before the optimizer update; loss reductions averaged over output coordinates. Figure~\ref{fig:jit_smoke_full} shows the complete three-branch diagnostic, including a common signal-prediction MSE for comparing fitting across objectives.

\begin{table}[!ht]
\centering
\small
\caption{Paired JiT-B/4 early-training diagnostic configuration.}
\label{tab:jit_early_settings}
\begin{tabular}{p{0.28\linewidth} p{0.64\linewidth}}
\toprule
\textbf{Item} & \textbf{Setting} \\
\midrule
Backbone / data & JiT-B/4; Tiny-ImageNet, $64\times64$ \\
Sampler / initialization & Uniform $(0,1)$; common initialization seed 0 \\
Compared branches & Signal MSE; velocity MSE with exact denominator $1-t$; velocity MSE with floor $10^{-6}$ \\
Budget / batch & 5,000 updates; four microbatches of 64, effective batch 256 \\
Learning-rate schedule & Base $5\times10^{-5}$; 5,000-step warmup on a shared 156,400-step schedule \\
Weight decay / label dropout & 0.03 / 0.1 \\
Precision & BF16 autocast; FP32 loss accumulation \\
Gradient recording & Pre-update accumulated minibatch-gradient norm; no gradient clipping or timestep truncation \\
Loss reduction & Mean over examples and output coordinates \\
\bottomrule
\end{tabular}
\end{table}

\begin{figure}[!htbp]
\centering
\includegraphics[width=\linewidth,trim=4.7bp 5.2bp 7.6bp 9.5bp,clip]{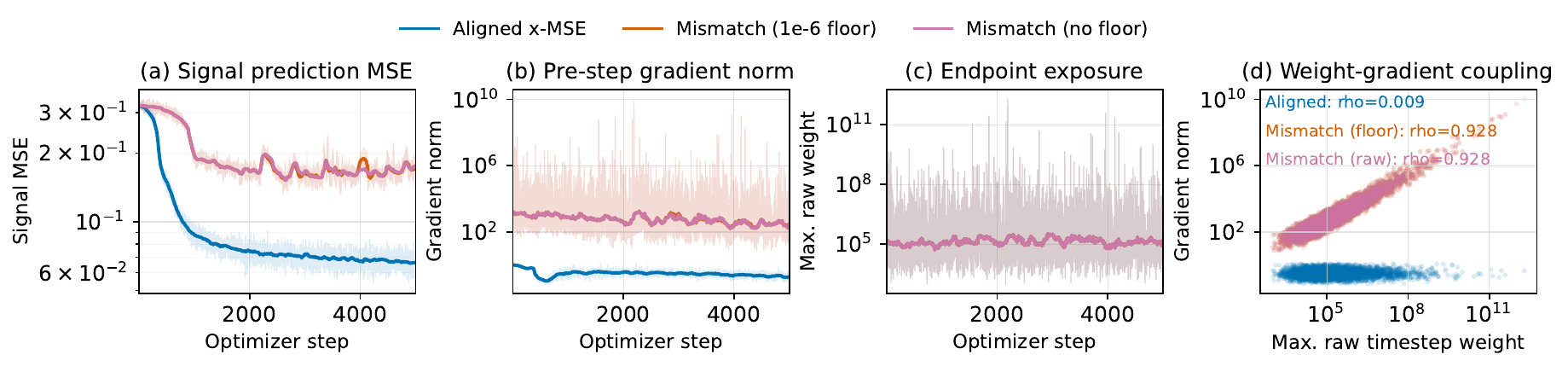}
\caption{Supplementary paired JiT-B/4 diagnostic under Uniform sampling. Three branches shared initialization and stochastic inputs, using aligned signal MSE, raw mismatched velocity MSE, or velocity MSE with a $10^{-6}$ denominator floor. Panels show signal MSE, pre-update minibatch-gradient norms, endpoint exposure, and weight--gradient associations. Thin curves retain all observations; thick curves show 101-step rolling medians.}
\label{fig:jit_smoke_full}
\end{figure}

The median pre-update gradient norm was $0.306$ for alignment and $590.98$ for raw mismatch. The supplementary $10^{-6}$ floor activated at step 2,192, where the largest sampled timestep was $0.9999992847$. The largest gradient norms were $5.44\times10^9$ with the floor and $1.06\times10^{10}$ without it; their medians were $596.91$ and $590.98$, respectively. Both mismatched branches retained higher early signal MSE than alignment. Thus the floor changed the extreme event, while the qualitative training contrast persisted. Correlations in Figure~\ref{fig:jit_smoke_full} use all 5,000 paired records and the maximum applied mismatch weight in each minibatch; the aligned branch is compared with the corresponding raw endpoint exposure. These associations describe the co-variation of sampled endpoint exposure and raw gradients along the recorded training trajectories.
All three branches remained numerically finite over the 5,000-step horizon.

\paragraph{Frozen-checkpoint protocol.}
\label{app:frozen_endpoint}
For Figure~\ref{fig:jit_endpoint}, we evaluated the raw, non-EMA weights from two epoch-399 checkpoints (400 completed epochs): aligned $x$-MSE with Uniform training and mismatched $x$-prediction+$v$-MSE with Logit-Normal training. Both evaluations used the same 32 fixed validation images from distinct, evenly spaced class indices, the same noise realizations, and 15 timesteps from $t=0$ to approximately $1-10^{-6}$. No model parameters were updated. Forward passes, losses, and gradient evaluations used FP32 without a denominator floor or gradient clipping.

\begin{figure}[!htbp]
\centering
\includegraphics[width=0.78\linewidth,trim=5.2bp 8bp 4.7bp 8bp,clip]{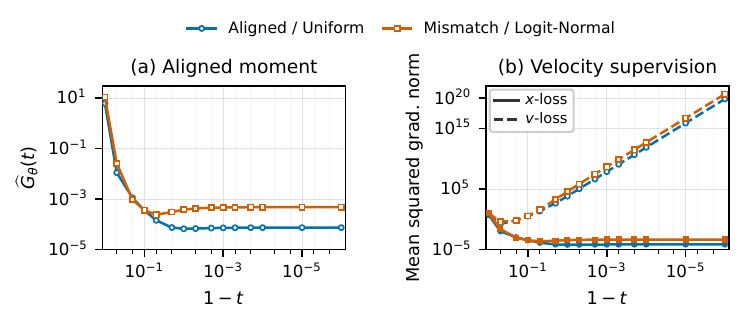}
\caption{Frozen JiT endpoint diagnostic after training. Left: empirical aligned-gradient second moment. Right: aligned (solid) and mismatched (dashed) losses at the same checkpoints. Colors identify the training recipe. Each point averages squared full-parameter gradient norms over the same 32 fixed validation image/noise pairs, evaluated one example at a time in FP32 without updates or clipping.}
\label{fig:jit_endpoint}
\end{figure}

At each timestep, we backpropagated one image at a time and averaged the squared Euclidean norms of the resulting full-parameter gradients. With $\ell_x^{\mathrm{mean}}=\ell_x/D$, the plotted empirical aligned moment is
\begin{equation}
\widehat G_\theta(t)=\frac{1}{n}\sum_{j=1}^n
\norm{\nabla_\theta\ell_{x,j}^{\mathrm{mean}}(t)}^2,
\qquad n=32.
\end{equation}
It is a fixed-set empirical counterpart of $G_\theta(t)/D^2$, rather than the squared norm of an averaged gradient. The coordinate normalization changes the magnitude but not the endpoint order. Because image selection is fixed and class-stratified, these values describe the diagnostic set rather than an unbiased estimate over the full training distribution.

For each checkpoint and input, the mismatched objective was also evaluated in the algebraically simplified form $(1-t)^{-2}\ell_x^{\mathrm{mean}}$, with the same model output. The resulting squared-gradient ratio is prescribed by the conversion identity. Comparing this form with direct velocity division gave maximum relative squared-gradient discrepancies below $2\times10^{-7}$ over the recorded grid; the ratio audit used the actual FP32 value of $1-t$. The empirical information is the behavior of the base moment itself: over the last three endpoint locations it approached a plateau in both checkpoints. The observed plateau concerns the recorded finite grid. The two checkpoints also differed in their training samplers, so the comparison reflects both objective and sampling choices.

For the aligned-trained checkpoint, reducing $1-t$ from approximately $10^{-4}$ to $10^{-6}$ left the aligned moment near $7.47\times10^{-5}$, while the mismatched moment rose from $7.47\times10^{11}$ to $7.10\times10^{19}$. The observed plateau is empirical; the amplification ratio follows from Equation~\eqref{eq:formal_conversion}. Together, these measurements show substantial endpoint amplification at the evaluated trained states over the recorded timestep grid.

\paragraph{Exploratory trained-state checks.}
We additionally recorded two paired restarts from trained checkpoints. Switching three trained Binary-MNIST checkpoints from aligned to mismatched supervision increased the first shared-minibatch gradient norm by $306\times$--$2766\times$; after 2,000 paired updates, FID-50K was $30.08\pm19.13$ for alignment and $1554.13\pm612.71$ for mismatch. A fresh-AdamW restart from a trained JiT checkpoint produced a $19{,}896\times$ first-minibatch raw-gradient ratio; the mismatch branch had 43\% higher mean signal MSE over 1,000 updates while remaining numerically finite. These finite-horizon observations describe changes in raw gradients, signal fitting, and generation following an objective switch. The interventions changed both the time weighting of the objective and the gradients presented to the optimizer.

\subsection{MIMO detection as a supplementary application}
\label{app:mimo}

We used the conditional flow-matching protocol and SGDiT backbone introduced by \citet{jiang2026sgdit}. This application provides supplementary evidence for the mismatch mechanism in conditional inference and additionally checks checkpoint-selection sensitivity. We consider the real-valued equivalent system
\begin{equation}
Y=HX+N,\qquad X\in\{-1,+1\}^{2N},\qquad N\sim\mathcal N(0,\sigma^2I),
\end{equation}
obtained from QPSK transmission over an i.i.d. Rayleigh-fading channel. The backbone combines a Soft Graph Transformer \citep{hong2025soft} with DiT-style timestep conditioning \citep{peebles2023scalable}; it conditions on $Y$, $H$, and the timestep and transports an initial Gaussian sample toward the conditional signal distribution. In every $x$-prediction configuration, the backbone directly outputs $\hat X_\theta(Z_t,t,Y,H)$ and the detector computes the implied velocity $(\hat X_\theta-Z_t)/(1-t)$ only for integration. In direct $v$-prediction configurations, the backbone output is the velocity itself. All objectives used the same two-step Euler detector. Training and integration truncated time at $t_{\max}=0.99$.

\begin{figure}[t]
\centering
\includegraphics[width=0.98\linewidth,trim=0bp 0bp 36bp 0bp,clip]{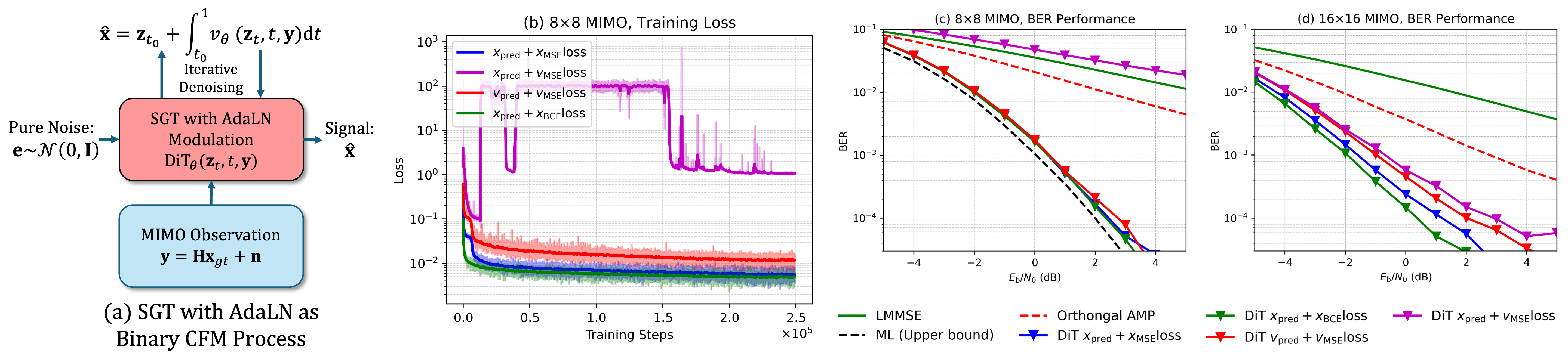}
\caption{Supplementary conditional MIMO detection. Aligned signal objectives avoided the severe instability of the tested $x$-prediction/velocity-loss mismatch and achieved lower BER under the reported recipes.}
\label{fig:mimo}
\end{figure}

The $8\times8$ system used learning rate $10^{-3}$ and the $16\times16$ system used $10^{-4}$. We evaluated both the final 250K checkpoint and the early 13K checkpoint favorable to the mismatched baseline. Aligned objectives retained a BER advantage under both checkpoint rules. The performance gaps persisted under the evaluated truncation and checkpoint settings.

\section{Additional Scope Clarifications}
\label{app:scope}

\paragraph{What ``robust'' means.}
We use robustness narrowly to mean reduced sensitivity of training dynamics to terminal timestep exposure and sampler choice. It does not refer to adversarial robustness, distribution-shift robustness, or a guarantee of convergence. Throughout this work, robustness is understood in this optimization setting.

\paragraph{Relation to discrete baselines.}
The binary experiments use an analog-bit representation within a continuous Gaussian path. This setting allows prediction--loss pairing and timestep sampling to be varied within the same continuous-state formulation. Categorical diffusion and discrete flow matching address the complementary question of selecting among different generative pipelines; the present comparisons focus on the internal objective--sampler mechanism.

\section{Compute and Assets}
\label{app:compute}

\Needspace{14\baselineskip}
\begin{wraptable}[12]{r}{0.5\textwidth}
\vspace{-0.5\baselineskip}
\centering
\caption{Published reference GPU specifications.}
\label{tab:gpu_specs}
\setlength{\tabcolsep}{4pt}
\begin{tabular}{lr}
\toprule
Specification & Reference value \\
\midrule
FP32 compute cores & 16,384 \\
Base clock (GHz) & 2.23 \\
Boost clock (GHz) & 2.52 \\
Standard memory & 24 GB GDDR6X \\
Memory interface & 384 bit \\
Graphics power (W) & 450 \\
\bottomrule
\end{tabular}
\end{wraptable}

Table~\ref{tab:gpu_specs} gives the published reference specifications of the GPU family used for the diagnostics. These nominal specifications describe the reference design, rather than host-specific memory configurations or measured training clocks.

Each diagnostic job used one GPU and executed its configurations serially; jobs ran on multiple hosts. The full JiT-B/4 matrix comprised six 400-epoch runs. One source run took approximately 26 hours, excluding queue time and some evaluation overhead. Each 5,000-step JiT smoke leg took about 48 minutes, and each 1,000-step paired restart leg took about 10 minutes. These are measured per-run times in the corresponding environments, not total exploratory compute.
\WFclear

MNIST \citep{lecunmnist} and Tiny-ImageNet \citep{le2015tinyimagenet} are established public research datasets. Binary MNIST is obtained by deterministic thresholding of MNIST. Tiny-ImageNet is derived from ImageNet and is used under its research and educational access terms. No human-subject data were collected, and no new dataset is released.

\end{document}